\documentclass[twoside,11pt]{article}

\usepackage{jmlr2e}

\usepackage{microtype}
\usepackage{graphicx}
\usepackage{booktabs}
\usepackage{subcaption}
\usepackage{algorithm}
\usepackage{algorithmic}
\newcommand{\R}{\mathbb{R}}
\newcommand{\E}{\mathbb{E}}
\newcommand{\loss}{\mathcal{L}}
\newcommand{\ham}{\mathcal{H}}   
\newcommand{\Ncal}{\mathcal{N}}
\newcommand{\Scal}{\mathcal{S}}

\newcommand{\Lie}{\mathcal{L}}
\newcommand{\ip}[2]{\left\langle #1,#2\right\rangle}
\newcommand{\norm}[1]{\left\lVert #1\right\rVert}

\newcommand{\Span}{\operatorname{span}}
\newcommand{\Ker}{\operatorname{Ker}}

\newcommand{\Ran}{\operatorname{Ran}}
\newcommand{\pop}{\mathrm{pop}}
\newcommand{\grok}{\mathrm{g}}
\newcommand{\eq}{\mathrm{eq}}
\newcommand{\euc}{\mathfrak{euc}}
\newcommand{\PN}{\Pi_{N}}
\newcommand{\Pperp}{\Pi_{\perp}}
\newcommand{\Pg}{\Pi_{\mathrm{g}}}
\newcommand{\Pzero}{\Pi_{0}}
\newcommand{\so}{\mathfrak{so}}

\ShortHeadings{Grokking on the Weight-Decay Clock}{Kim}
\firstpageno{1}

\begin{document}

\title{Grokking on the Weight-Decay Clock:\\
A Rate Hierarchy from Softly Broken Symmetries}

\author{\name Taeyoung Kim \email taeyoungkim@kias.re.kr\\
       \addr Center for AI and Natural Sciences\\
       Korea Institute for Advanced Study\\
       Seoul 02455, Republic of Korea}

\maketitle

\begin{abstract}
Delayed generalization, or grokking, remains poorly understood despite extensive empirical study. We identify an exactly solvable late-time relaxation mechanism for grokking in linear models trained with full-batch heavy-ball optimization and weight decay, together with a locally quadratic extension to nonlinear neural networks. Our analysis reveals a distinguished population-active component of the empirical null space, which we call the grokking subspace. Along this subspace, the training predictions remain unchanged, leaving weight decay as the sole restoring force and giving rise to a slow dissipative relaxation governed by exact discrete-time and continuous-time laws. We show that only this subspace contributes to the slow asymptotic decay of the population risk and derive explicit iteration-scale predictions for the grokking time, recovering the familiar $(1-\beta)/(\eta\lambda)$ scaling in the weak-regularization regime. The theory further predicts distinct effects of optimizer choice, distinguishing coupled $L_2$ regularization from decoupled weight decay, and yields causal predictions for interventions that modify the grokking component. We verify all theoretical identities without fitted parameters in a synthetic model where every subspace and relaxation rate is computable in closed form. We further observe delayed generalization in modular addition, where the measured delay follows the predicted scaling and the late-time relaxation agrees closely with the theoretical clock. 
\end{abstract}

\begin{keywords}
grokking, delayed generalization, weight decay, Noether's theorem,
Hamiltonian dynamics, neural tangent kernel
\end{keywords}

\section{Introduction}
\label{sec:introduction}

Neural networks often improve their test performance only well after the
training loss has converged \citep{power2022grokking}; this delayed
generalization is termed \emph{grokking}. The delay is highly sensitive
to weight decay, learning rate, momentum, and dataset size, which points
to an underlying dynamical cause rather than a purely representational
one.

Existing work can be broadly grouped into norm-based, mechanistic, and
dynamical explanations. Norm-based accounts argue that optimization
first reaches a high-norm memorizing solution, after which
regularization selects a lower-complexity solution
\citep{liu2022omnigrok,junior2025grokking}. Mechanistic and
feature-learning studies identify the delayed formation of structured
representations, such as Fourier-like circuits in modular arithmetic
\citep{nanda2023progress,gromov2023grokking,mohamadi2024why}. Dynamical
analyses show that, under small weight decay, optimization rapidly
approaches an interpolation manifold and subsequently follows a slow
norm-minimizing flow on a timescale of order $1/\lambda$
\citep{boursier2025theoretical,xu2026grok,truong2026norm}.

The slow $1/\lambda$ timescale itself is therefore not new. What the
existing accounts leave open is which part of the slow motion
matters for generalization, and how the answer depends on the
optimizer. Previous work identifies slow motion along the interpolation
manifold; we further decompose its tangent space into
population-invisible directions and population-active empirical-null
directions, and show that only the latter contribute to the late-time
population-risk tail. Concretely, the empirical Gram matrix $G_N$ and
its population counterpart $G_\pop$ have nested null spaces
$\Ker G_\pop\subseteq\Ker G_N$, and the Euclidean representative of the
quotient,
\[
    \Scal
    =
    \Ker G_N\cap(\Ker G_\pop)^\perp ,
\]
is the subspace along which the network predictions are frozen on the
training set yet still move at the population level. We call $\Scal$
the \emph{grokking subspace}.

We begin with the exactly solvable case and treat the continuous-time, Hamiltonian, and nonlinear settings as perturbations of it. For models that are linear in their parameters, trained with
the squared loss and heavy-ball momentum with coupled $L_2$
regularization, the full mode spectrum of the training recurrence is
available in closed form, with no continuous-time or overdamped
approximation. Translations along $\Ker G_N$ are exact symmetries of
the training loss; weight decay breaks them softly, and the resulting
soft Noether law is literally the equation of motion of the slow
coordinate. The continuous-time picture, the Hamiltonian formulation,
and the local analysis of nonlinear networks are then controlled
perturbations of this exact core.

Our contributions are:
\begin{itemize}
    \item a characterization of post-interpolation motion through the
    empirical--population kernel mismatch: the tangent space of the
    interpolation manifold splits as
    $\Ker G_N=\Scal\oplus\Ker G_\pop$, and only the population-active
    component $\Scal$ contributes to the slow empirical-null part of
    the asymptotic population-risk tail --- transverse directions
    remain in a faster remainder
    (Theorem~\ref{thm:population-tail});
    \item the exact discrete mode spectrum of heavy-ball training with
    coupled $L_2$ regularization, including the exact slow root
    $\rho_0$ and the iteration-scale law
    $k_{\mathrm{grok}}\approx\frac{1-\beta}{\eta\lambda}
    \log(\norm{\Pg\theta_0}/\varepsilon)$, with
    $\eta\lambda/(1-\beta)$ and $\lambda/\gamma$ exhibited as
    weak-regularization asymptotics of exact roots
    (Theorem~\ref{thm:discrete-spectrum},
    Corollary~\ref{cor:iteration-clock});
    \item a soft Noether law centered on the translation symmetries of the empirical null space: the translation charges are
    the momenta of the slow coordinates, their softly broken
    conservation law is the slow-mode ODE, and the rotation charges
    supply a faster angular clock with an exact discrete counterpart
    $Q_k=\beta^{\,k-1}Q_1$ (Theorem~\ref{thm:soft-noether},
    Corollary~\ref{cor:rotation-charges});
    \item falsifiable predictions are checked at two levels (Section~\ref{sec:experiments}). First, an exactly computable synthetic model verifies, without fitted parameters, the $(1-\beta)/(\eta\lambda)$ collapse of the relaxation count, the logarithmic delay shift produced by interventions that rescale the grokking component while preserving all training predictions, the train/test signature distinguishing $\Scal$ from genuine population-null directions, and the factor-$(1-\beta)$ difference between coupled $L_2$ and decoupled weight decay. Second, a modular-addition benchmark exhibits genuine delayed generalization with a delay that follows the predicted $(1-\beta)/(\eta\lambda)$ scaling (measured log–log slope $1.01$).
\end{itemize}

\paragraph{Scope.}
Our theorems concern squared loss (or locally quadratic regimes),
full-batch heavy-ball dynamics, and weak regularization
$\eta\lambda<(1-\sqrt\beta)^2$; nonlinear networks are treated locally,
with explicit remainder assumptions and an explicit error floor
(Section~\ref{sec:nonlinear}). Equally important is what the mechanism
does \emph{not} explain: grokking without explicit weight decay; the
creation of new population-visible directions by feature learning (our
projectors are frozen at a reference point, and the benchmark of
Section~\ref{sec:modular} shows this matters quantitatively); the
selection or diffusion effects of mini-batch noise; the abrupt jump of
test accuracy, which requires a margin argument beyond the loss
tail; quantitative dataset-size laws ($N$ enters through subspace
dimensions, initial amplitudes, and the compatibility condition, not
through the rate $\rho_0$ itself); and adaptive or preconditioned
optimizers, deferred to a brief outlook
(Appendix~\ref{app:natural-gradient}). What the paper does provide is
one computable late-time mechanism with causal signatures that
are testable beyond the solvable regime.

\section{Setup: Heavy-Ball Dynamics and Kernel Geometry}
\label{sec:setup}

\subsection{Model, Loss, and Regularization}
\label{sec:model}

Let $f(\cdot;\theta):\R^{n_0}\to\R$ be a model with parameters
$\theta\in\Theta\simeq\R^P$, trained on a dataset
$\{(x_i,y_i)\}_{i=1}^N$ drawn from a population distribution $\mu$ over
inputs with target function $y_\star\in L^2_\mu$. Throughout the main
text we use the \emph{squared loss}; the empirical and population
risks and the regularized objective are
\begin{equation}
    \loss_N(\theta)
    =
    \frac{1}{2N}\sum_{i=1}^N
    \bigl(f(x_i;\theta)-y_i\bigr)^2,
    \qquad
    \loss_\pop(\theta)
    =
    \frac12\,
    \E_{x\sim\mu}\Big[\bigl(f(x;\theta)-y_\star(x)\bigr)^2\Big],
\label{eq:losses}
\end{equation}
\begin{equation}
    U(\theta)
    =
    \loss_N(\theta)
    +
    \frac{\lambda}{2}\norm{\theta}^2 .
\label{eq:objective}
\end{equation}
Scalar outputs keep the notation light; all statements extend to
$\R^{n_L}$-valued outputs by stacking.

\begin{remark}[General losses]
\label{rem:general-loss}
For a general twice-differentiable loss $\ell(f,y)$ the natural
substitute for the Gram matrices below is the generalized Gauss--Newton
metric
$G_N^{\ell}=\frac1N\sum_i \partial_\theta f_i^\top\,
\partial_f^2\ell_i\,\partial_\theta f_i$. We restrict the main text to
the squared loss, where $G_N^{\ell}=G_N$ and the theory is free
of curvature-residual terms; the general case is discussed in
Section~\ref{sec:nonlinear}.
\end{remark}

\subsection{Heavy-Ball Recurrence and Its Continuous Limit}
\label{sec:heavy-ball}

We analyze the heavy-ball scheme with \emph{coupled} $L_2$
regularization,
\begin{equation}
    v_{k+1}
    =
    \beta v_k-\eta\nabla U(\theta_k),
    \qquad
    \theta_{k+1}
    =
    \theta_k+v_{k+1},
\label{eq:heavy-ball}
\end{equation}
with learning rate $\eta>0$ and momentum $\beta\in[0,1)$.

\begin{remark}[Coupled $L_2$ versus decoupled weight decay]
\label{rem:coupled-decoupled}
In \cref{eq:heavy-ball} the regularizer enters through $\nabla U$ and
is therefore integrated by the momentum buffer. The decoupled
variant (SGDW-style) applies the decay directly to the iterate,
$\theta_{k+1}=\theta_k+v_{k+1}-\eta\lambda\theta_k$ with
$v_{k+1}=\beta v_k-\eta\nabla\loss_N(\theta_k)$. For plain gradient
descent the two coincide; with momentum they induce different
null-space dynamics and different clocks
(Proposition~\ref{prop:decoupled}). The two update rules as
displayed here are the objects analyzed in this paper; deep-learning
frameworks implement further variants (decay applied before or after
the buffer update, learning-rate--coupled decay, schedules), so every
coupled-versus-decoupled statement below is to be read ``under this
convention.'' Adaptive optimizers introduce further differences that
we do not analyze.
\end{remark}

Eliminating $v_k=\theta_k-\theta_{k-1}$ turns \cref{eq:heavy-ball}
into the second-order difference equation
\begin{equation}
    \theta_{k+1}-(1+\beta)\theta_k+\beta\theta_{k-1}
    =
    -\eta\nabla U(\theta_k).
\label{eq:second-order-difference}
\end{equation}
Introducing the interpolation $\theta_k=\theta(t_k)$, $t_k=k\delta$,
and Taylor-expanding, we find that the diffusive time step
$\delta=\sqrt\eta$ balances the inertial and friction terms and yields,
to leading order in $\sqrt\eta$,
\begin{equation}
    m\ddot{\theta}
    +
    \gamma\dot{\theta}
    =
    -\nabla U(\theta),
    \qquad
    m=\frac{1+\beta}{2},
    \qquad
    \gamma=\frac{1-\beta}{\sqrt{\eta}} .
\label{eq:damped-particle}
\end{equation}
The status of \cref{eq:damped-particle} needs care. With
$\delta=\sqrt\eta$ the recurrence admits an $\eta$-dependent
modified equation with the stated coefficients; it is not a limit at
fixed $\beta$, since $\gamma=(1-\beta)/\sqrt\eta$ diverges as
$\eta\to0$ with $\beta$ fixed, and the dynamics then becomes strongly
overdamped. A finite-friction continuous \emph{limit} additionally
requires the joint scaling $1-\beta=\Gamma\sqrt\eta+o(\sqrt\eta)$, in
which case $\gamma\to\Gamma$ and $m\to1$. We therefore treat
\cref{eq:damped-particle} throughout as a modified-equation companion
at the given $(\eta,\beta)$, valid to leading order in $\sqrt\eta$,
with iteration counts converting through $t=k\sqrt\eta$: every
headline statement is proved directly for the recurrence
\cref{eq:second-order-difference}, and continuous-time versions are
stated alongside with their own exact rates. The regime $\beta\to1$
corresponds to weakly damped dynamics.

\subsection{Observability Gram Matrices and the Grokking Subspace}
\label{sec:kernel-geometry}

Let $J_x(\theta):=\partial f(x;\theta)/\partial\theta\in\R^{1\times P}$
and define the empirical and population Gram matrices
\begin{equation}
    G_N(\theta)
    =
    \frac{1}{N}\sum_{i=1}^N
    J_{x_i}(\theta)^\top J_{x_i}(\theta),
    \qquad
    G_\pop(\theta)
    =
    \int
    J_x(\theta)^\top J_x(\theta)
    \,\mathrm{d}\mu(x).
\label{eq:gram-matrices}
\end{equation}
We emphasize the intended role of these matrices: they are
\emph{observability Grams}, quantifying how visible an infinitesimal
parameter motion $v$ is in prediction space,
\[
    v^\top G_N v
    =
    \frac1N\sum_i\bigl(J_{x_i}v\bigr)^2,
    \qquad
    v^\top G_\pop v
    =
    \bigl\|J_\cdot\,v\bigr\|_{L^2_\mu}^2 .
\]
These Gram matrices quantify observability in prediction space rather than inertia in parameter space. In the Hamiltonian formulation, the kinetic metric is instead the optimizer-induced scalar metric $mI$ (Section~\ref{sec:symmetries}); metric-preconditioned extensions are deferred to Appendices~\ref{app:noether} and~\ref{app:natural-gradient}.

Define the null spaces (at a fixed reference $\theta$, suppressed from
the notation when clear)
\[
    \Ncal_N
    =
    \Ker G_N
    =
    \{v:\ J_{x_i}v=0\ \text{for all }i\},
    \qquad
    \Ncal_\pop
    =
    \Ker G_\pop
    =
    \{v:\ J_xv=0\ \ \mu\text{-a.e.}\},
\]
and write $r_N:=\dim\Ncal_N$, $r_\pop:=\dim\Ncal_\pop$ for their
dimensions. (The nullity $r_N$ is distinct from the input dimension
$n_0$.)

\begin{lemma}[Kernel inclusion under sampling]
\label{lem:kernel-inclusion}
For every dataset, $\Ncal_\pop\subseteq\Ncal_N$ fails only if some
training input lies in one of the exceptional $\mu$-null sets
$\{x:J_xv\neq0\}$, $v\in\Ncal_\pop$. In particular, if
$x_1,\dots,x_N$ are drawn i.i.d.\ from $\mu$, then
\[
    \Ncal_\pop\subseteq\Ncal_N
    \qquad\text{with probability one.}
\]
\end{lemma}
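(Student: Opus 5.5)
The plan is to prove the deterministic statement first and then derive the probabilistic one by a countability reduction. Fix the reference $\theta$ and take $v\in\Ncal_\pop$. By definition $v^\top G_\pop v=\int (J_xv)^2\,\mathrm{d}\mu(x)=0$, so the set $E_v:=\{x: J_xv\neq0\}$ is $\mu$-null. If no training input lies in $E_v$, then $J_{x_i}v=0$ for every $i$, hence $v^\top G_N v=\frac1N\sum_i (J_{x_i}v)^2=0$. Since $G_N$ is positive semidefinite, this forces $G_Nv=0$, i.e.\ $v\in\Ncal_N$. This proves the first claim: if the inclusion fails for some $v$, then some $x_i$ lies in $E_v$.

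For the almost-sure statement we cannot take a union over all $v\in\Ncal_\pop$, because there are uncountably many. I would instead use linearity of $v\mapsto J_xv$. Choose a finite basis $v_1,\dots,v_{r_\pop}$ of $\Ncal_\pop$ (with $r_\pop\le P<\infty$) and set
\[
E:=\bigcup_{j=1}^{r_\pop}E_{v_j},
\]
a finite union of $\mu$-null sets and therefore $\mu$-null. For any $v=\sum_j c_jv_j$ and any $x\notin E$ we have $J_xv=\sum_j c_jJ_xv_j=0$. So $E$ contains every exceptional set $E_v$, up to the trivial case $v=0$, and the inclusion $\Ncal_\pop\subseteq\Ncal_N$ can fail only if some $x_i\in E$.

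Finally, for i.i.d.\ samples, $\Pr(\exists i: x_i\in E)\le\sum_{i=1}^N\mu(E)=0$. Hence the inclusion holds with probability one.

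The main obstacle is the uncountability issue just handled by the finite basis; the rest is routine. A measure-theoretic caveat also needs stating: $x\mapsto J_x(\theta)$ must be measurable, with $\int\norm{J_x}^2\,\mathrm{d}\mu<\infty$ so that $G_\pop$ is defined. Then each $E_{v_j}$ is measurable and the union bound applies. I would state this measurability as implicit in the definition of $G_\pop$ in \cref{eq:gram-matrices}.
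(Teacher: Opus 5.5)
Your proof is correct and follows essentially the same route as the paper's. Both arguments show that each $\{x: J_x v_j\neq0\}$ is $\mu$-null because $v_j^\top G_\pop v_j=0$, reduce the uncountable family of exceptional sets to a finite basis $v_1,\dots,v_{r_\pop}$ of $\Ncal_\pop$, take a finite union over basis vectors and samples, and obtain the deterministic claim as the contrapositive. Your version only makes explicit two points the paper leaves implicit: the linearity step $E_v\subseteq\bigcup_j E_{v_j}$, and the measurability of $x\mapsto J_x$.
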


\begin{proof}
If $v\in\Ncal_\pop$ then $v^\top G_\pop v=\norm{J_\cdot v}_{L^2_\mu}^2=0$,
so $J_xv=0$ for $\mu$-a.e.\ $x$. For i.i.d.\ samples, each fixed basis
vector $v$ of $\Ncal_\pop$ satisfies $J_{x_i}v=0$ for all $i$ almost
surely; intersecting the finitely many probability-one events over a
basis gives the claim. The deterministic statement is the contrapositive.
\end{proof}

The lemma is a statement at a fixed reference $\theta$ (for
the linear model the Jacobians are constant, so this is immaterial;
for nonlinear models the null spaces are $\theta$-dependent). We do not claim that the inclusion holds simultaneously along a
data-dependent training trajectory $\theta_k$, which would raise an
adaptivity issue; the local analysis of Section~\ref{sec:nonlinear}
therefore works exclusively with projectors frozen at a single
reference point.

The inclusion is generally strict, and the \emph{intrinsic} object of
interest is the quotient $\Ncal_N/\Ncal_\pop$: the space of tangent
classes that are invisible on the training set but visible at the
population level. The quotient is defined without reference to any
inner product and is therefore reparameterization-natural. To compute
with it we choose the orthogonal representative induced by the
optimizer's kinetic metric $mI$ (Section~\ref{sec:symmetries}),
\begin{equation}
    \Scal
    :=
    \Ncal_N\cap\Ncal_\pop^{\perp},
    \qquad
    r_{\grok}:=\dim\Scal=r_N-r_\pop ,
\label{eq:grokking-subspace}
\end{equation}
and we emphasize that $\Scal$ and its projector are
metric-dependent choices: under a general constant kinetic
metric $M$ the natural representative is
$\Scal_M=\Ncal_N\cap\Ncal_\pop^{\perp_M}$, and under a
reparameterization the orthogonal complement changes accordingly. All
statements about ``the grokking subspace'' below are statements about
the pair (quotient, chosen metric); we do not claim that $\Scal$
itself is reparameterization-invariant. With this understood, the
parameter space splits orthogonally as
\begin{equation}
    \R^P
    =
    \underbrace{\Ncal_N^{\perp}}_{\text{fit directions}}
    \ \oplus\
    \underbrace{\Scal}_{\text{grokking subspace}}
    \ \oplus\
    \underbrace{\Ncal_\pop}_{\text{population-null}} ,
    \qquad
    \Ncal_N=\Scal\oplus\Ncal_\pop .
\label{eq:decomposition}
\end{equation}
We write $\PN$, $\Pperp=I-\PN$, $\Pg$, and $\Pzero$ for the
orthogonal projections onto $\Ncal_N$, $\Ncal_N^\perp$, $\Scal$, and
$\Ncal_\pop$. Directions in $\Scal$ leave every training prediction
unchanged to first order yet change the function on a set of positive
$\mu$-measure; in the chosen metric, they are the carriers of the
post-interpolation motion that can change generalization.
Figure~\ref{fig:decomposition} summarizes the decomposition.

\begin{figure}[t]
\centering
\includegraphics[width=0.9\textwidth]{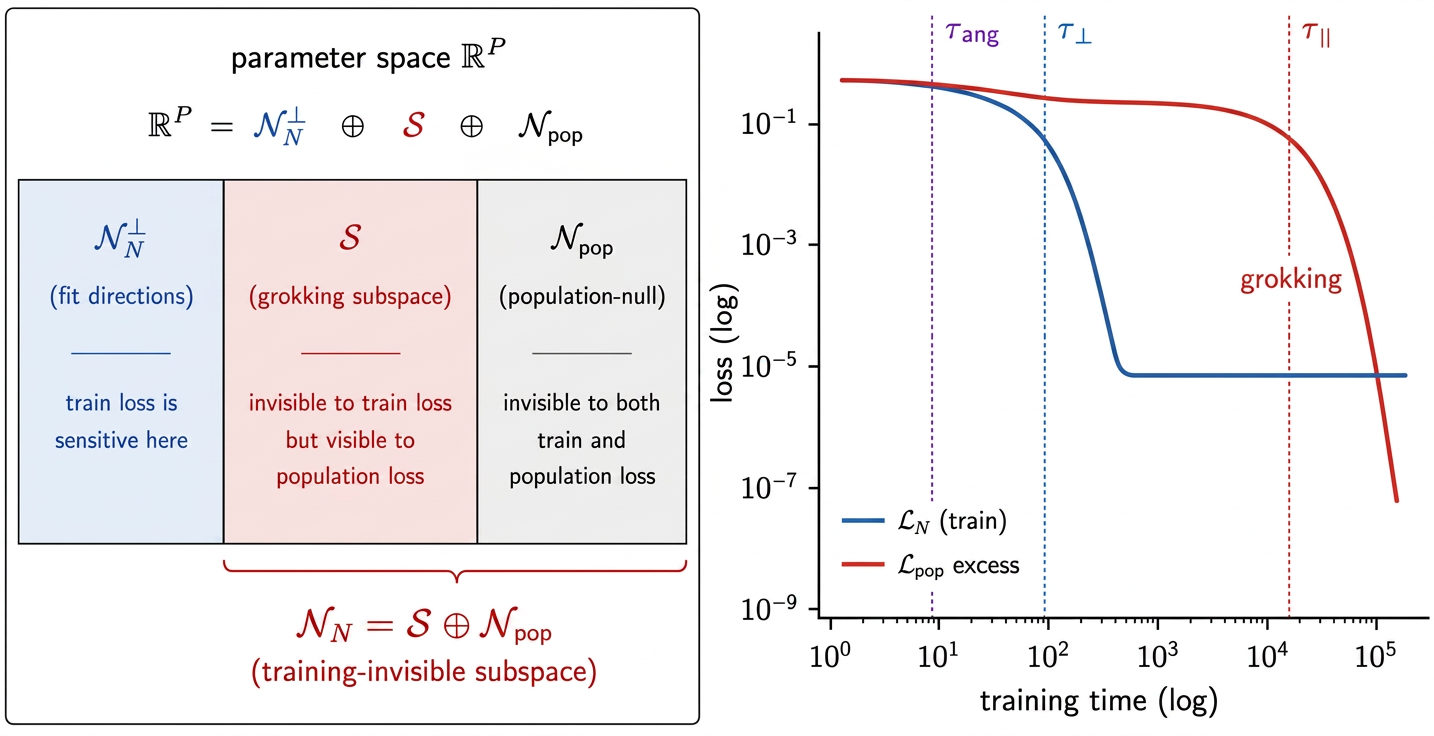}
\caption{Kernel decomposition and the three clocks. Left: the
parameter space splits into fit directions $\Ncal_N^{\perp}$, the
grokking subspace $\Scal=\Ncal_N\cap\Ncal_\pop^\perp$, and the
population-null space $\Ncal_\pop$; the training loss sees only
$\Ncal_N^\perp$, the population loss sees $\Ncal_N^\perp\oplus\Scal$.
Right: schematic trajectory: fast fit at rates $(h_j+\lambda)/\gamma$,
angular relaxation at $\gamma/m$, and a slow weight-decay contraction
of the grokking component at
$r_{\mathrm{slow}}\approx\lambda/\gamma$, which alone sets the slow
component of the late-time population-risk tail.}
\label{fig:decomposition}
\end{figure}

\subsection{Reference Points: Interpolators, Minimum-Norm Solutions, and Population Stationarity}
\label{sec:reference-points}

Three conceptually different reference points appear in analyses of
grokking, and much confusion follows from conflating them:
\begin{align*}
    \theta_{\mathrm{int}} &:\ \text{an interpolator reached by the fast
    dynamics, } \loss_N(\theta_{\mathrm{int}})=0;\\
    \theta_{\min} &:\ \text{the minimum-$\ell_2$-norm interpolator,
    characterized by } \PN\theta_{\min}=0;\\
    \theta_\pop &:\ \text{a stationary point of the population risk, }
    \nabla\loss_\pop(\theta_\pop)=0 .
\end{align*}
These coincide only under additional hypotheses. Our results are
organized so that each statement names the reference point it actually
needs: the exact linear theory of
Sections~\ref{sec:exact-linear}--\ref{sec:population-tail} is written
around the regularized equilibrium
$\theta_\eq:=\arg\min_\theta U(\theta)$, which exists and is unique for
$\lambda>0$, satisfies $\PN\theta_\eq=0$, and converges to
$\theta_{\min}$ as $\lambda\to0$; population stationarity is
never assumed implicitly, but enters
Theorem~\ref{thm:population-tail} through an explicit compatibility
condition on $\Pg\nabla\loss_\pop(\theta_\eq)$.

\section{Exact Theory for the Linear Model}
\label{sec:exact-linear}

This section is the exact core of the paper. We take the model to be
linear in its parameters,
\begin{equation}
    f(x;\theta)=\phi(x)^\top\theta,
    \qquad
    \phi:\R^{n_0}\to\R^P\ \text{fixed},
\label{eq:linear-model}
\end{equation}
so that $J_x=\phi(x)^\top$ is independent of $\theta$, and all Gram
matrices are constant. Write $\phi_i:=\phi(x_i)$,
$\Phi\in\R^{N\times P}$ for the matrix with rows $\phi_i^\top$,
$\mathsf J_N:=\Phi$,
\[
    G_N=\frac1N\Phi^\top\Phi,
    \qquad
    g_N=\frac1N\Phi^\top y,
    \qquad
    G_\pop=\E_\mu\bigl[\phi(x)\phi(x)^\top\bigr],
    \qquad
    g_\pop=\E_\mu\bigl[y_\star(x)\,\phi(x)\bigr].
\]
Both $\loss_N$ and $\loss_\pop$ are then quadratic:
$\loss_N(\theta)=\frac12\theta^\top G_N\theta-g_N^\top\theta+c_N$ and
$\loss_\pop(\theta)=\frac12\theta^\top G_\pop\theta
-g_\pop^\top\theta+c_\pop$. Since $g_N\in\Ran(G_N)=\Ncal_N^\perp$, the
regularized objective $U$ has the unique minimizer
\begin{equation}
    \theta_\eq=(G_N+\lambda I)^{-1}g_N\in\Ncal_N^{\perp},
    \qquad\text{so}\qquad
    \PN\theta_\eq=0 .
\label{eq:ridge-equilibrium}
\end{equation}
The identity $\PN\theta_\eq=0$ is a property of the equilibrium, not an
extra assumption; it is what later removes all weight-decay cross terms
without any appeal to ``centered coordinates''.

\subsection{Exact Discrete Mode Spectrum}
\label{sec:discrete-spectrum}

Let $\xi_k:=\theta_k-\theta_\eq$. Substituting
$\nabla U(\theta)=(G_N+\lambda I)(\theta-\theta_\eq)$ into
\cref{eq:second-order-difference} gives the exact linear recurrence
\begin{equation}
    \xi_{k+1}
    =
    \bigl[(1+\beta)I-\eta(G_N+\lambda I)\bigr]\xi_k
    -\beta\,\xi_{k-1} .
\label{eq:exact-recurrence}
\end{equation}
Let $\{(h_j,u_j)\}_{j=1}^{P}$ be an orthonormal eigensystem of $G_N$,
with $h_j>0$ for $u_j\in\Ncal_N^\perp$ and $h_j=0$ for
$u_j\in\Ncal_N$. In the coordinate $q_{j,k}:=u_j^\top\xi_k$ the
recurrence decouples into scalar three-term recursions
\begin{equation}
    q_{j,k+1}
    =
    a_j\,q_{j,k}-\beta\,q_{j,k-1},
    \qquad
    a_j:=1+\beta-\eta(h_j+\lambda),
\label{eq:modal-recurrence}
\end{equation}
with characteristic roots
\begin{equation}
    \rho_j^{\pm}
    =
    \frac{a_j\pm\sqrt{a_j^2-4\beta}}{2} .
\label{eq:characteristic-roots}
\end{equation}
On the empirical null space the recursion is \emph{data-independent}:
every coordinate $q=\ip{\xi}{b}$ with $b\in\Ncal_N$ obeys
\begin{equation}
    q_{k+1}=(1+\beta-\eta\lambda)\,q_k-\beta\,q_{k-1} .
\label{eq:null-recurrence}
\end{equation}

\begin{theorem}[Exact discrete mode spectrum and rate hierarchy]
\label{thm:discrete-spectrum}
Consider the linear model \cref{eq:linear-model} with squared loss,
trained by the heavy-ball iteration \cref{eq:heavy-ball}. Then every
trajectory is an exact superposition of modes,
\[
    \xi_k=\sum_j\bigl(c_j^+(\rho_j^+)^k+c_j^-(\rho_j^-)^k\bigr)u_j
\]
(with the usual confluent modification $(c_j^0+c_j^1k)\rho^k$ when
$a_j^2=4\beta$), and:
\begin{enumerate}
\item[(i)] \textbf{Stability.} $\max(|\rho_j^+|,|\rho_j^-|)<1$ for all
$j$ if and only if $0<\eta(h_j+\lambda)<2(1+\beta)$ for all $j$.
\item[(ii)] \textbf{Null modes.} If
\begin{equation}
    \eta\lambda<(1-\sqrt\beta)^2 ,
\label{eq:overdamped-null}
\end{equation}
then for $h_j=0$ both roots are real and positive,
\begin{equation}
    \rho_0
    :=
    \frac{a_0+\sqrt{a_0^2-4\beta}}{2},
    \qquad
    \rho_1:=\frac{\beta}{\rho_0},
    \qquad
    a_0=1+\beta-\eta\lambda ,
\label{eq:slow-root}
\end{equation}
with $\beta\le\rho_1<\sqrt\beta<\rho_0<1$, and the slow rate admits the
expansion
\begin{equation}
    1-\rho_0
    =
    \frac{\eta\lambda}{1-\beta}
    +O\!\left(\frac{(\eta\lambda)^2}{(1-\beta)^3}\right).
\label{eq:slow-root-expansion}
\end{equation}
\item[(iii)] \textbf{Transverse modes.} For $h_j>0$ with
$(1-\sqrt\beta)^2<\eta(h_j+\lambda)<(1+\sqrt\beta)^2$ the roots form a
complex pair with $|\rho_j^\pm|=\sqrt\beta$; for
$\eta(h_j+\lambda)\le(1-\sqrt\beta)^2$ they are real with
$\rho_j^+<\rho_0$. Consequently, if in addition to
\cref{eq:overdamped-null} the no-sign-flip condition
$\eta(h_j+\lambda)\le(1+\sqrt\beta)^2$ holds for all $j$, then
\begin{equation}
    \max_{j:\,h_j>0}\ |\rho_j^{\pm}|
    \;<\;
    \rho_0 ,
\label{eq:rate-hierarchy-discrete}
\end{equation}
i.e.\ every direction that changes a training prediction relaxes
strictly faster than every empirical-null direction.
\end{enumerate}
\end{theorem}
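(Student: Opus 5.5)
The proof reduces everything to the scalar three-term recursions \cref{eq:modal-recurrence}. Since $G_N$ is symmetric, the orthonormal eigenbasis $\{u_j\}$ diagonalizes the matrix $(1+\beta)I-\eta(G_N+\lambda I)$ in \cref{eq:exact-recurrence}, so $\xi_k=\sum_j q_{j,k}u_j$ with each $q_{j,k}$ solving a linear recursion with characteristic polynomial $p_j(\rho)=\rho^2-a_j\rho+\beta$. Standard theory of linear constant-coefficient recurrences then gives $q_{j,k}=c_j^+(\rho_j^+)^k+c_j^-(\rho_j^-)^k$ for distinct roots, and $(c_j^0+c_j^1k)\rho^k$ when $a_j^2=4\beta$; the constants are fixed by $q_{j,0}$ and $q_{j,1}$. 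This proves the superposition claim.

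For (i) I would apply the Schur--Cohn (Jury) criterion to the real quadratic $\rho^2-a\rho+\beta$. Both roots lie in the open unit disc iff $|\beta|<1$ and $|a|<1+\beta$. Given $\beta\in[0,1)$, the condition $|a_j|<1+\beta$ is $-(1+\beta)<1+\beta-\eta(h_j+\lambda)<1+\beta$, which is exactly $0<\eta(h_j+\lambda)<2(1+\beta)$. If one wants to avoid citing Jury, the same conclusion follows by checking $p_j(1)>0$, $p_j(-1)>0$ and $|\rho^+\rho^-|=\beta<1$, separately in the complex case (where the modulus is $\sqrt\beta$) and the real case.

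For (ii), set $a_0=1+\beta-\eta\lambda$. The discriminant $a_0^2-4\beta$ is positive iff $a_0>2\sqrt\beta$ (since $a_0>0$), that is, iff $\eta\lambda<(1-\sqrt\beta)^2$, which is \cref{eq:overdamped-null}. In that case the roots are real, their product is $\beta\ge0$ and their sum is $a_0>0$, so both are nonnegative, with $\rho_1=\beta/\rho_0$, and $\rho_1<\sqrt\beta<\rho_0$ because the roots are distinct. Next, $p_0(1)=\eta\lambda>0$ together with $p_0'(1)=2-a_0>0$ places both roots below $1$, so $\rho_0<1$; then $\rho_1=\beta/\rho_0>\beta$, which gives $\beta\le\rho_1$ (with equality only in the degenerate case $\beta=0$). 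For the expansion \cref{eq:slow-root-expansion}, write $\rho_0=1-s$ and substitute into $p_0=0$. This gives $s(1-\beta)-s^2=\eta\lambda(1-s)$, i.e. $s=\eta\lambda/(1-\beta)+(s^2-\eta\lambda s)/(1-\beta)$. A fixed-point or implicit-function bound with $s\lesssim \eta\lambda/(1-\beta)$ yields the $O((\eta\lambda)^2/(1-\beta)^3)$ remainder. The main obstacle is making this uniform in $\beta\to1$. I would use the closed form $s=\tfrac12\bigl[(1-\beta+\eta\lambda)-\sqrt{(1-\beta+\eta\lambda)^2-4\eta\lambda}\bigr]$ and expand the square root in $x=4\eta\lambda/(1-\beta+\eta\lambda)^2$. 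The hypothesis \cref{eq:overdamped-null} keeps $x<1$, but near the boundary the constant blows up. So I would state the $O$ for $\eta\lambda/(1-\beta)^2$ small, where it is clean.

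For (iii), when $h_j>0$ the discriminant $a_j^2-4\beta<0$ iff $|a_j|<2\sqrt\beta$, iff $(1-\sqrt\beta)^2<\eta(h_j+\lambda)<(1+\sqrt\beta)^2$. In that range the roots are complex conjugates, so $|\rho_j^\pm|^2=\beta$. When $\eta(h_j+\lambda)\le(1-\sqrt\beta)^2$ we have $a_j\ge2\sqrt\beta$ and real roots. Since $\rho^+(a)=\tfrac12(a+\sqrt{a^2-4\beta})$ is strictly increasing in $a$ on $[2\sqrt\beta,\infty)$ and $a_j=a_0-\eta h_j<a_0$, we get $\rho_j^+<\rho_0$, and $\rho_j^-\le\rho_j^+$. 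At the boundary $\eta(h_j+\lambda)=(1+\sqrt\beta)^2$ the root is $-\sqrt\beta$, of modulus $\sqrt\beta$. Collecting the cases under the no-sign-flip condition, every transverse root has modulus either $\le\sqrt\beta<\rho_0$ or $\le\rho_j^+<\rho_0$. Since there are finitely many $j$, the maximum is strictly below $\rho_0$, which is \cref{eq:rate-hierarchy-discrete}.
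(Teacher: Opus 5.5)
Your proposal is correct and follows the paper's proof essentially step by step (modal decoupling in the eigenbasis of $G_N$, the $p(\pm1)>0$ test for stability, the discriminant and $p_0(1)=\eta\lambda>0$ for the null roots, monotonicity of $\rho^+(a)$ plus the modulus $\sqrt\beta$ on the complex branch), with one caveat that the paper shares: $\rho_1<\sqrt\beta$ is strict only for $\beta>0$, since at $\beta=0$ one has $\rho_1=0=\sqrt\beta$. The obstacle you flag for \eqref{eq:slow-root-expansion} is not real: $s:=1-\rho_0=2\eta\lambda/\bigl((1-\beta+\eta\lambda)+\sqrt{a_0^2-4\beta}\bigr)$ lies in $(0,2\eta\lambda/(1-\beta))$, so your own identity $s-\eta\lambda/(1-\beta)=s(s-\eta\lambda)/(1-\beta)$ bounds the remainder by $4(\eta\lambda)^2/(1-\beta)^3$ uniformly on all of \eqref{eq:overdamped-null}, and no smallness of $\eta\lambda/(1-\beta)^2$ is needed.
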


\begin{proof}
See Appendix~\ref{app:proofs-discrete}. The content of (iii) is a
monotonicity argument for the real branch and the exact modulus
$\sqrt\beta$ on the complex branch, together with
$\rho_0>\sqrt\beta$ from \cref{eq:overdamped-null}; the no-sign-flip
condition excludes a sliver adjacent to the stability boundary, where
strongly overdriven oscillatory modes can relax arbitrarily slowly.
\end{proof}

\begin{corollary}[Iteration-scale grokking clock]
\label{cor:iteration-clock}
Assume \cref{eq:overdamped-null} and write the null-block solution as
$\PN\xi_k=\rho_0^k\,w_0+\rho_1^k\,w_1$ with $w_0,w_1\in\Ncal_N$
determined by $(\xi_0,\xi_{-1})$. If $\Pg w_0\neq0$, then for
every $\varepsilon\in(0,\norm{\Pg w_0})$ the grokking iteration
count $k_{\mathrm{grok}}(\varepsilon):=
\min\{k:\norm{\Pg\xi_k}\le\varepsilon\}$ satisfies
\begin{equation}
    k_{\mathrm{grok}}(\varepsilon)
    =
    \frac{\log\bigl(\norm{\Pg w_0}/\varepsilon\bigr)}
         {-\log\rho_0}
    +O(1),
\label{eq:kgrok-exact}
\end{equation}
where the $O(1)$ term is controlled by the fast/slow amplitude ratio
and decays like $(\rho_1/\rho_0)^k$. In the weak-regularization regime
$\eta\lambda\ll(1-\beta)^2$ this gives the headline law
\begin{equation}
    \boxed{\ \
    k_{\mathrm{grok}}(\varepsilon)
    \;\approx\;
    \frac{1-\beta}{\eta\lambda}\,
    \log\frac{\norm{\Pg\xi_0}}{\varepsilon} .
    \ \ }
\label{eq:kgrok-headline}
\end{equation}
\end{corollary}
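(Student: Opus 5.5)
The plan is to work entirely inside the null block and reduce the corollary to a one-dimensional root-counting statement. By \cref{eq:null-recurrence}, every coordinate of $\PN\xi_k$ obeys the same scalar three-term recursion with characteristic roots $\rho_0,\rho_1$ from Theorem~\ref{thm:discrete-spectrum}(ii). Under \cref{eq:overdamped-null} these roots are real and distinct, so $\PN\xi_k=\rho_0^k w_0+\rho_1^k w_1$ with $w_0,w_1\in\Ncal_N$. Solving the $2\times2$ Vandermonde system from $\PN\xi_0$ and $\PN\xi_{-1}$ gives
\[
w_0=\frac{\PN\xi_0-\rho_1\PN\xi_{-1}}{1-\rho_1/\rho_0}\cdot\frac{1}{1}\ \text{(up to the normalisation fixed by } k=0,-1\text{)} .
\]
Since $\Pg$ is an orthogonal projection onto a subspace of $\Ncal_N$ and commutes with the scalar recursion, we get $\Pg\xi_k=\rho_0^k a+\rho_1^k b$ with $a=\Pg w_0\ne0$ and $b=\Pg w_1$.

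Next, sandwich $\norm{\Pg\xi_k}$ between slow-mode envelopes. The triangle inequality gives
\[
\rho_0^k\bigl(\norm a-\norm b\,(\rho_1/\rho_0)^k\bigr)\le\norm{\Pg\xi_k}\le\rho_0^k\bigl(\norm a+\norm b\,(\rho_1/\rho_0)^k\bigr).
\]
Here $\rho_1/\rho_0=\beta/\rho_0^2<1$, because $\rho_0>\sqrt\beta$. Choose $K_0$ with $\norm b(\rho_1/\rho_0)^{K_0}\le\tfrac12\norm a$; it depends only on the amplitude ratio $\norm b/\norm a$ and on $\rho_1/\rho_0$. For $k\ge K_0$ the norm lies within a factor $1\pm\tfrac12$ of $\norm a\rho_0^k$. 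Taking logarithms, the first crossing of the level $\varepsilon$ beyond $K_0$ then lies within $\log(3/2)/(-\log\rho_0)$ or $\log 2/(-\log\rho_0)$ of $k^\ast=\log(\norm a/\varepsilon)/(-\log\rho_0)$, up to one extra step for rounding.

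The step I expect to be the main obstacle is interpreting $O(1)$ correctly. There are two concerns. First, the defining minimum could be attained at some $k<K_0$, through transient cancellation between the two modes. Second, the envelope slack $\log(1\pm\delta)/(-\log\rho_0)$ is bounded in $k$ but is not uniform in $\rho_0\to1$. I would handle both by the same device: the relative error $\delta_k=\norm b(\rho_1/\rho_0)^k/\norm a$ decays geometrically, and the error in the crossing time equals $\log(1+O(\delta_{k^\ast}))/(-\log\rho_0)$. This is exactly the claimed correction controlled by the fast/slow amplitude ratio and decaying like $(\rho_1/\rho_0)^k$.

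For early crossings I would state the result for $\varepsilon$ small enough that $k^\ast\ge K_0$, which is the regime of interest. Alternatively, I would define $k_{\mathrm{grok}}$ restricted to $k\ge K_0$ and absorb the difference into the $O(1)$, stating this convention explicitly.

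Finally, for \cref{eq:kgrok-headline}, use $-\log\rho_0=(1-\rho_0)(1+O(1-\rho_0))$ together with the expansion \cref{eq:slow-root-expansion}. This gives $1/(-\log\rho_0)=\frac{1-\beta}{\eta\lambda}\bigl(1+O(\eta\lambda/(1-\beta)^2)\bigr)$, which is the weak-regularization prefactor.

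It remains to replace $\norm{\Pg w_0}$ by $\norm{\Pg\xi_0}$. From the explicit solution of the Vandermonde system, $w_0-\PN\xi_0$ is proportional to $\rho_1\PN(\xi_0-\xi_{-1})$ divided by $(\rho_0-\rho_1)$, that is, to the initial velocity in the null block. With the standard initialisation $v_0=0$ ($\xi_{-1}=\xi_0$), the formula gives $w_0=\frac{1-\rho_1}{\rho_0-\rho_1}\PN\xi_0$. In the weak-regularization regime $\rho_1\approx\beta$ and $\rho_0\approx1$, so this factor is $1+O(\eta\lambda/(1-\beta)^2)$. Only its logarithm enters the formula, which justifies the $\approx$ in the boxed law. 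I would state the zero-initial-velocity assumption explicitly.
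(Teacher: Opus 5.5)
Your proposal is correct and follows the paper's own argument. You factor out $\rho_0^k$ and wait until the $\rho_1$-mode is at most half of $\norm{\Pg w_0}$ (your $K_0$ is the paper's $k_1$). You then bracket $\norm{\Pg\xi_k}$ within $[\tfrac12,\tfrac32]\,\norm{\Pg w_0}\rho_0^k$, solve for $k$, and read off the boxed law from the expansion of $1-\rho_0$.

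Your extra care about early crossings and about replacing $\norm{\Pg w_0}$ by $\norm{\Pg\xi_0}$ addresses points the paper passes over. For early crossings, however, only your second remedy (restricting the minimum to $k\ge K_0$) is watertight. An initial condition with $\Pg\xi_0=0$ but $\Pg v_0\neq0$ has $\Pg w_0=\beta\Pg v_0/(\rho_0-\rho_1)\neq0$, yet $k_{\mathrm{grok}}(\varepsilon)=0$ for every $\varepsilon$, however large $k^\ast$ is.

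There is one harmless algebra slip. In general $w_0-\PN\xi_0=\rho_1(\PN\xi_0-\rho_0\PN\xi_{-1})/(\rho_0-\rho_1)$, which is not a pure velocity term. For $\xi_{-1}=\xi_0$ this gives $w_0=\frac{\rho_0(1-\rho_1)}{\rho_0-\rho_1}\PN\xi_0$; you dropped the factor $\rho_0$. This is still $\bigl(1+O(\eta\lambda/(1-\beta)^2)\bigr)\PN\xi_0$, so your conclusion stands.
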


Equation~\eqref{eq:kgrok-headline} is stated in iterations, the
quantity actually measured in experiments, and requires no
continuous-time approximation: $\rho_0$ in \cref{eq:kgrok-exact} is
exact. A note on naming: $k_{\mathrm{grok}}$ is defined here as a
\emph{parameter-space relaxation count} of the population-active null
component, not through test accuracy; the name anticipates its
connection to the population-risk tail established in
Section~\ref{sec:population-tail}.

\begin{proposition}[Decoupled weight decay has a different clock]
\label{prop:decoupled}
For the decoupled variant of Remark~\ref{rem:coupled-decoupled}
applied to the linear model, the dynamics of a null coordinate
$q_k=\ip{\theta_k}{b}$, $b\in\Ncal_N$, is governed by the exact
two-dimensional linear system
\[
    q_{k+1}=(1-\eta\lambda)\,q_k+\beta v^b_k,
    \qquad
    v^b_{k+1}=\beta v^b_k ,
\]
with spectrum
\begin{equation}
    \{\,1-\eta\lambda,\ \beta\,\},
\label{eq:sgdw-spectrum}
\end{equation}
hence slow rate $-\log(1-\eta\lambda)\approx\eta\lambda$ and
\[
    k^{\mathrm{dec}}_{\mathrm{grok}}(\varepsilon)
    \approx
    \frac{1}{\eta\lambda}\,
    \log\frac{\norm{\Pg\xi_0}}{\varepsilon}
    \;=\;
    \frac{k_{\mathrm{grok}}(\varepsilon)}{1-\beta} .
\]
Coupled $L_2$ regularization therefore accelerates the grokking clock
by the factor $(1-\beta)^{-1}$ relative to decoupled weight decay at
equal $(\eta,\lambda,\beta)$: the momentum buffer integrates the decay
force in the coupled case but not in the decoupled one. This
optimizer dependence is a falsifiable prediction of the theory.
\end{proposition}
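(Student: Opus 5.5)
We project the decoupled update onto a fixed direction $b\in\Ncal_N$. The key fact is that for the linear model $\nabla\loss_N(\theta)=G_N\theta-g_N$, and both terms are orthogonal to $b$: $G_N b=0$ by definition, and $g_N\in\Ran(G_N)=\Ncal_N^\perp$. Hence $\ip{\nabla\loss_N(\theta_k)}{b}=0$ for every $k$, with no dependence on the iterate.

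Writing $v^b_k:=\ip{v_k}{b}$ and $q_k:=\ip{\theta_k}{b}$, the buffer update $v_{k+1}=\beta v_k-\eta\nabla\loss_N(\theta_k)$ then projects to $v^b_{k+1}=\beta v^b_k$. The iterate update $\theta_{k+1}=\theta_k+v_{k+1}-\eta\lambda\theta_k$ projects to
\[
q_{k+1}=(1-\eta\lambda)q_k+v^b_{k+1}=(1-\eta\lambda)q_k+\beta v^b_k .
\]
This is exactly the stated two-dimensional system, with an upper triangular matrix
\[
\begin{pmatrix}1-\eta\lambda&\beta\\0&\beta\end{pmatrix},
\]
whose spectrum is $\{1-\eta\lambda,\beta\}$. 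The triangular structure already shows the structural difference from the coupled case: the decay force never enters the buffer, so momentum cannot amplify it. Compare \cref{eq:null-recurrence}, where the characteristic polynomial mixes $\beta$ and $\eta\lambda$.

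Next we solve the system explicitly. For $\beta\neq1-\eta\lambda$,
\[
q_k=(1-\eta\lambda)^k\Bigl(q_0-\tfrac{\beta v^b_0}{\beta-1+\eta\lambda}\Bigr)+\beta^k\,\tfrac{\beta v^b_0}{\beta-1+\eta\lambda},
\]
with the confluent case handled as in Theorem~\ref{thm:discrete-spectrum}. We apply this componentwise for $b$ ranging over an orthonormal basis of $\Scal$, which gives $\Pg\theta_k=(1-\eta\lambda)^k w_0^{\rm dec}+\beta^k w_1^{\rm dec}$.

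To place this in the same frame as Corollary~\ref{cor:iteration-clock}, we check the reference point. For the decoupled scheme the fixed point has null part zero, because $q=(1-\eta\lambda)q$ forces $q=0$. So $\Pg\xi_k=\Pg\theta_k$, matching the coupled convention, where $\PN\theta_\eq=0$.

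The clock then follows by repeating the argument of Corollary~\ref{cor:iteration-clock}. Because $\eta\lambda<(1-\sqrt\beta)^2\le1-\beta$, we have $\beta<1-\eta\lambda$, so the $\beta^k$ term is the fast one. Its relative weight decays like $(\beta/(1-\eta\lambda))^k$. This gives
\[
k^{\rm dec}_{\rm grok}=\frac{\log(\norm{\Pg w_0^{\rm dec}}/\varepsilon)}{-\log(1-\eta\lambda)}+O(1).
\]
Using $-\log(1-\eta\lambda)=\eta\lambda+O((\eta\lambda)^2)$ turns this into the $1/(\eta\lambda)$ law. Comparing with \cref{eq:slow-root-expansion}, where $-\log\rho_0\approx\eta\lambda/(1-\beta)$, yields the ratio $k_{\rm grok}/k^{\rm dec}_{\rm grok}\approx1-\beta$, which is the stated relation.

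The main obstacle is the interpretive approximation step, not the algebra. The headline formula uses $\norm{\Pg\xi_0}$ in place of the slow amplitude $\norm{\Pg w_0^{\rm dec}}$. These agree only up to the buffer contribution $\beta v_0^b/(\beta-1+\eta\lambda)$, and they agree exactly when $v_0=0$. The same issue arises for $w_0$ in the coupled case. We will state the comparison under matched initial conditions, for instance zero initial momentum, and as a ratio of logarithmic rates. Under those conditions the $O(1)$ and amplitude discrepancies are subleading in the weak-regularization regime.
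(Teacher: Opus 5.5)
Your proposal is correct and follows the paper's proof: both rest on the identity $\ip{\nabla\loss_N(\theta)}{b}=0$ for $b\in\Ncal_N$, which decouples the buffer as $v^b_{k+1}=\beta v^b_k$ and yields a triangular $2\times2$ system with spectrum $\{1-\eta\lambda,\beta\}$. Your extra steps are sound and go beyond the paper's terse argument: the explicit solution, the check that the decoupled fixed point has zero null part, and the remark that $\norm{\Pg\xi_0}$ equals the slow amplitude exactly only for zero initial momentum.
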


\begin{proof}
On $\Ncal_N$ we have $\nabla\loss_N(\theta)^\top b=\frac1N(\Phi\theta-y)^\top\Phi b=0$
exactly, so the loss never forces $v^b$; the update reads
$v^b_{k+1}=\beta v^b_k$ and
$q_{k+1}=q_k+v^b_{k+1}-\eta\lambda q_k$. The system matrix is
triangular with the stated eigenvalues.
\end{proof}

\subsection{Continuous-Time Limit: Exact Roots First}
\label{sec:continuous-roots}

The continuous-time companion of \cref{eq:null-recurrence} is the
damped oscillator obtained by restricting \cref{eq:damped-particle} to
a null coordinate $q=\ip{\theta}{b}$, $b\in\Ncal_N$:
\begin{equation}
    m\ddot q+\gamma\dot q+\lambda q=0 ,
\label{eq:null-oscillator}
\end{equation}
with \emph{exact} characteristic roots
\begin{equation}
    s_{\pm}
    =
    \frac{-\gamma\pm\sqrt{\gamma^2-4m\lambda}}{2m} .
\label{eq:continuous-roots}
\end{equation}
In the overdamped regime $\gamma^2\ge4m\lambda$ the slow decay rate is
\begin{equation}
    r_{\mathrm{slow}}
    :=
    -s_+
    =
    \frac{\gamma-\sqrt{\gamma^2-4m\lambda}}{2m}
    \;=\;
    \frac{2\lambda}{\gamma+\sqrt{\gamma^2-4m\lambda}} ,
\label{eq:rslow-exact}
\end{equation}
and only in the weak-regularization asymptotics $4m\lambda\ll\gamma^2$
does one recover the familiar rate
\begin{equation}
    r_{\mathrm{slow}}
    =
    \frac{\lambda}{\gamma}
    +\frac{m\lambda^2}{\gamma^3}
    +O\!\left(\frac{m^2\lambda^3}{\gamma^5}\right).
\label{eq:rslow-asymptotic}
\end{equation}
All continuous-time tails below are stated with the exact rate
$r_{\mathrm{slow}}$; expressions containing $\lambda/\gamma$ are
approximations of \cref{eq:rslow-exact}, not independent results.

\begin{remark}[Discrete--continuous consistency, leading order]
\label{rem:consistency}
With $m=(1+\beta)/2$, $\gamma=(1-\beta)/\sqrt\eta$, and the step
$\Delta t=\sqrt\eta$, the exact rate of the modified equation converts
to a per-step rate
$r_{\mathrm{slow}}\sqrt\eta=\eta\lambda/(1-\beta)+O\bigl((\eta\lambda)^2\bigr)$,
matching $-\log\rho_0$ from \cref{eq:slow-root-expansion} to
leading order; the two are distinct beyond that order, and the
numerical proximity of the two conversions in
Table~\ref{tab:rates} is a property of the chosen hyperparameters,
not an identity. 
\end{remark}

\section{Softly Broken Empirical Symmetries}
\label{sec:symmetries}

We now explain why the slow modes of
Section~\ref{sec:exact-linear} exist and what sets their rate: the
empirical null space carries an algebra of prediction-preserving
symmetries; weight decay supplies the only restoring force along
them (it controls the symmetry-breaking scale) while
inertia and damping convert that scale into a time or iteration rate.
The resulting charge dynamics is the slow-mode \cref{eq:null-oscillator}. Two distinct effects must be kept apart
throughout: friction damps every charge non-conservatively even at
$\lambda=0$, whereas weight decay is an explicit breaking of
the potential's symmetry; we use the term ``soft Noether law'' for the
resulting dissipative charge-balance equation.

\subsection{Hamiltonian Formulation of the Continuous Dynamics}
\label{sec:hamiltonian}

Throughout the main text the kinetic metric is the optimizer-induced
scalar $mI$, $m=(1+\beta)/2$; general constant metrics $M$, together
with the distinction between momentum-proportional and
velocity-proportional damping that arises for $M\neq mI$, are treated
in Appendix~\ref{app:noether}. On the phase space $T^\ast\Theta$ with
canonical momentum $p=m\dot\theta$, define
\begin{equation}
    \ham(\theta,p)
    =
    \frac{1}{2m}\,p^\top p+U(\theta).
\label{eq:hamiltonian}
\end{equation}
The continuous heavy-ball dynamics \cref{eq:damped-particle} is the
dissipative Hamiltonian flow
\begin{equation}
    \dot{\theta}
    =
    \frac{\partial \ham}{\partial p}
    =
    \frac{p}{m},
    \qquad
    \dot{p}
    =
    -\frac{\partial \ham}{\partial\theta}
    -
    \frac{\gamma}{m}\,p ,
\label{eq:hamiltonian-flow}
\end{equation}
along which
$\frac{d\ham}{dt}=-\gamma\norm{\dot\theta}^2
=-\frac{\gamma}{m^2}\norm{p}^2\le0$. 

For a vector field $X(\theta)$ on $\Theta$, the associated charge is
\[
    Q_X(\theta,p)
    =
    \ip{p}{X(\theta)} .
\]
Along \cref{eq:hamiltonian-flow} it satisfies (Appendix~\ref{app:noether})
\begin{equation}
    \frac{dQ_X}{dt}
    =
    \{Q_X,\ham\}-\frac{\gamma}{m}\,Q_X ,
    \qquad
    \{Q_X,\ham\}
    =
    \frac{m}{2}\,\dot\theta^\top\!\bigl(DX+DX^\top\bigr)\dot\theta
    -\ip{X}{\nabla U},
\label{eq:charge-drift}
\end{equation}
where $DX$ is the Jacobian of $X$. A vector field that generates a
one-parameter group of isometries of $mI$ (a Killing field,
$DX+DX^\top=0$) and preserves $\loss_N$ contributes only the friction
and weight-decay drifts.

\begin{lemma}[Noether condition, conservative case]
\label{lem:noether}
For the conservative, unregularized dynamics
($\gamma=0$, $\lambda=0$), $Q_X$ is conserved along every trajectory
if and only if $X$ is a Killing field of the kinetic metric and
$\ip{X}{\nabla\loss_N}\equiv0$, i.e.\ the flow of $X$ preserves both
the kinetic energy and the empirical loss.
\end{lemma}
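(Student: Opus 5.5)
With $\gamma=0$ and $\lambda=0$, the charge-drift identity \cref{eq:charge-drift} becomes
\[
    \frac{dQ_X}{dt}=\frac{m}{2}\,\dot\theta^\top\bigl(DX+DX^\top\bigr)\dot\theta-\ip{X}{\nabla\loss_N}.
\]
This holds because $U=\loss_N$ when $\lambda=0$. The plan is to show that this right-hand side vanishes along every trajectory exactly when both terms vanish identically. Sufficiency is immediate: if $DX+DX^\top=0$ and $\ip{X}{\nabla\loss_N}\equiv0$, both terms are zero for every $(\theta,\dot\theta)$, so $Q_X$ is constant.

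For necessity, the key fact is that in the conservative flow \cref{eq:hamiltonian-flow} every phase point $(\theta_0,p_0)\in T^\ast\Theta$ is the initial condition of some trajectory. So if $Q_X$ is conserved along every trajectory, then $\frac{dQ_X}{dt}\big|_{t=0}=0$ for all $(\theta_0,\dot\theta_0)$. Fix $\theta_0$ and regard the right-hand side as a function of $v=\dot\theta_0$. It is a quadratic form in $v$ plus a term that does not depend on $v$:
\[
    \frac{m}{2}\,v^\top S(\theta_0)\,v-c(\theta_0)=0\quad\text{for all }v,
    \qquad S=DX+DX^\top,\quad c=\ip{X}{\nabla\loss_N}.
\]
Setting $v=0$ gives $c(\theta_0)=0$. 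The quadratic form then vanishes for all $v$, and since $S$ is symmetric, polarization gives $S(\theta_0)=0$. Because $\theta_0$ was arbitrary, $X$ is a Killing field of $mI$, and $X$ preserves $\loss_N$. The Killing condition is equivalent to the flow of $X$ preserving the kinetic energy, and the second condition to its flow preserving the empirical loss; this gives the stated interpretation.

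The only step that needs care is the derivation of \cref{eq:charge-drift}, which is deferred to Appendix~\ref{app:noether}. I would check it directly. Using $p=m\dot\theta$,
\[
    \frac{d}{dt}\ip{p}{X(\theta)}=\ip{\dot p}{X}+\ip{p}{DX\,\dot\theta}.
\]
The first term equals $-\ip{\nabla U}{X}-\frac{\gamma}{m}Q_X$. The second equals $m\,\dot\theta^\top DX\,\dot\theta$, which symmetrizes to $\frac{m}{2}\dot\theta^\top(DX+DX^\top)\dot\theta$. I also need $X$ to be $C^1$ so that $DX$ exists and is continuous, which makes the pointwise argument legitimate; the trajectories exist locally by standard ODE theory. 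No further obstacle is expected.
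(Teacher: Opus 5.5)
Your proof is correct and follows the paper's route. You specialize the charge-drift identity \cref{eq:charge-drift} (derived in Appendix~\ref{app:noether} via the Lie derivative of the kinetic metric, which for $M=mI$ reduces to your direct computation) to $\gamma=\lambda=0$, and your necessity step correctly supplies the converse that the paper leaves implicit: every phase point $(\theta_0,p_0)$ starts a trajectory, so taking $\dot\theta_0=0$ forces $\ip{X}{\nabla\loss_N}(\theta_0)=0$, and polarizing the symmetric form $DX+DX^\top$ then gives the Killing condition.
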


\subsection{A Taxonomy: Null Directions, Local Symmetries, and Gauge Redundancy}
\label{sec:taxonomy}

A null direction of a Gram matrix at a single parameter value is sometimes informally interpreted as generating a symmetry. In general, however, this is only a local first-order condition and need not integrate to a finite prediction-preserving transformation. We will therefore distinguish these notions carefully.

\begin{definition}[Symmetry taxonomy]
\label{def:taxonomy}
Let $\mathcal U\subseteq\Theta$ be open, $\theta_\star\in\mathcal U$,
$v\in\R^P$, and let $X$ be a vector field on $\mathcal U$ with flow
$\Psi_s$.
\begin{enumerate}
\item[(a)] $v$ is an empirical-null tangent direction (a local
first-order symmetry at $\theta_\star$) if
$\mathsf J_N(\theta_\star)v=0$.
\item[(b)] $X$ is a local empirical symmetry field on
$\mathcal U$ if $\mathsf J_N(\theta)X(\theta)=0$ for all
$\theta\in\mathcal U$.
\item[(c)] $X$ is an integrable empirical symmetry if
$F_N(\Psi_s(\theta))=F_N(\theta)$ for all $\theta\in\mathcal U$ and
all $s$ for which the flow stays in $\mathcal U$.
\item[(d)] $v$ (or $X$) is a gauge redundancy if the
corresponding transformation preserves the network function itself,
$f(x;\Psi_s(\theta))=f(x;\theta)$ for $\mu$-a.e.\ $x$.
\end{enumerate}
\end{definition}

Property (b) implies (c) by integrating
$\frac{d}{ds}F_N(\Psi_s(\theta))=\mathsf J_N(\Psi_s(\theta))X(\Psi_s(\theta))=0$,
but (a) alone does not imply (c) for nonlinear models: a
Jacobian null direction at a single point need not integrate to a
finite transformation that preserves the predictions. Accordingly, for
nonlinear networks we use the terms \emph{empirical-null direction} and
\emph{population-null direction} for elements of $\Ker G_N(\theta_\star)$
and $\Ker G_\pop(\theta_\star)$, and reserve ``symmetry'' and ``gauge''
for cases where (b)--(d) actually hold; the local analysis of
Section~\ref{sec:nonlinear} quantifies the error committed by the
first-order picture.

For the exact linear model the gap closes: constant vector fields
$X\equiv b$ with $b\in\Ncal_N$ satisfy (b) globally, hence (c):
translations along the empirical null space are exact symmetries
of the prediction map itself, hence of the training loss,
$F_N(\theta+sb)=F_N(\theta)$ and
$\loss_N(\theta+sb)=\loss_N(\theta)$ for all $s\in\R$. Likewise $b\in\Ncal_\pop$ gives an exact gauge redundancy
(d), since $f(x;\theta+sb)=f(x;\theta)$ for $\mu$-a.e.\ $x$. Directions
in $\Scal=\Ncal_N\cap\Ncal_\pop^\perp$ are exact training symmetries
that are not gauge redundancies: they change the function on a set of positive measure. This is what makes them the carriers of grokking.

\subsection{The Affine Symmetry Algebra of the Linear Model}
\label{sec:affine-algebra}

To capture both \emph{translations} along the interpolation manifold and \emph{rotations} within the null space, we work with affine vector fields rather than restricting to homogeneous linear vector fields.

\begin{theorem}[Prediction-preserving affine symmetries, Euclidean metric]
\label{thm:affine-generators}
For the linear model with kinetic metric $mI$, consider the affine
vector field
\[
    X_{b,A}(\theta)=b+A(\theta-\theta_\eq).
\]
Then $X_{b,A}$ is a Killing field of $mI$ and preserves the empirical
prediction map pointwise, $F_N(\Psi_s(\theta))=F_N(\theta)$ for all
$\theta$ and $s$ (hence in particular preserves $\loss_N$) if
and only if
\begin{equation}
    \mathsf J_N b=0,
    \qquad
    \mathsf J_N A=0,
    \qquad
    A^\top+A=0 ,
\label{eq:affine-conditions}
\end{equation}
equivalently $b\in\Ncal_N$ and $A=V_0\,\Omega\,V_0^\top$ with
$\Omega^\top=-\Omega$, where $V_0\in\R^{P\times r_N}$ is any matrix
whose columns form an orthonormal basis of $\Ncal_N$. The set of such
fields is a Lie algebra under the vector-field bracket,
\[
    [X_{b,A},X_{b',A'}]
    =
    X_{A'b-Ab',\,A'A-AA'} ,
\]
isomorphic to the Euclidean algebra of the null space,
\begin{equation}
    \euc(r_N)\;=\;\R^{r_N}\rtimes\so(r_N),
    \qquad
    \dim=\frac{r_N(r_N+1)}{2} .
\label{eq:euclidean-algebra}
\end{equation}
\end{theorem}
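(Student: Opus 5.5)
The plan is to treat the two defining properties separately: the Killing condition is a pointwise statement about the Jacobian $DX_{b,A}=A$, and prediction preservation is a statement about the linear map $\mathsf J_N=\Phi$. After that, the Lie-algebra claims reduce to a direct bracket computation.

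\emph{Killing condition.} Since $DX_{b,A}\equiv A$, the Killing condition $DX+DX^\top=0$ of \cref{eq:charge-drift} for the metric $mI$ is exactly $A+A^\top=0$.

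\emph{Prediction preservation.} By Definition~\ref{def:taxonomy}, (b) implies (c), so it suffices for sufficiency to show $\mathsf J_N X_{b,A}(\theta)=0$ for every $\theta$. Here $\mathsf J_N X_{b,A}(\theta)=\mathsf J_N b+\mathsf J_N A(\theta-\theta_\eq)$. If $\mathsf J_N b=0$ and $\mathsf J_N A=0$, this vanishes identically.

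For necessity, assume $F_N(\Psi_s(\theta))=F_N(\theta)$ for all $\theta,s$. Differentiating at $s=0$ gives $\mathsf J_N X_{b,A}(\theta)=0$ for all $\theta$, which is an affine function of $\theta$ vanishing everywhere. Evaluating at $\theta=\theta_\eq$ gives $\mathsf J_N b=0$. Its linear part then gives $\mathsf J_N A=0$.

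\emph{Parametrization.} Now $\mathsf J_N A=0$ says that $\Ran A\subseteq\Ncal_N$, so $A=\PN A$. Skew-symmetry gives $A=-A^\top=-(\PN A)^\top=-A^\top\PN=A\PN$. Hence $A=\PN A\PN=V_0(V_0^\top AV_0)V_0^\top$, and $\Omega:=V_0^\top AV_0$ is skew. Conversely, every $A=V_0\Omega V_0^\top$ with $\Omega$ skew is skew and satisfies $\Phi V_0=0$.

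\emph{Bracket.} The vector-field bracket $[X,Y]=DY\,X-DX\,Y$ is a one-line computation for affine fields. Put $\zeta=\theta-\theta_\eq$. Then
\[
[X_{b,A},X_{b',A'}]=A'(b+A\zeta)-A(b'+A'\zeta)=(A'b-Ab')+(A'A-AA')\zeta,
\]
which is the stated formula. Closure requires three checks:
\begin{enumerate}
\item $A'b-Ab'\in\Ran A'+\Ran A\subseteq\Ncal_N$;
\item $A'A-AA'$ is skew, being the commutator of two skew matrices;
\item $\Phi(A'A-AA')=0$, since $\Phi A=\Phi A'=0$.
\end{enumerate}
The Jacobi identity is inherited from the bracket of vector fields.

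\emph{Isomorphism and dimension.} The map $X_{b,A}\mapsto(V_0^\top b,\,V_0^\top AV_0)\in\R^{r_N}\times\so(r_N)$ is a linear bijection, using $b=V_0V_0^\top b$ and $A=V_0\Omega V_0^\top$. To see that it intertwines brackets, identify it with the affine-matrix representation $\begin{pmatrix}\Omega&c\\0&0\end{pmatrix}$. The bracket formula shows the vector-field bracket corresponds to minus the matrix commutator: the pair $(A'b-Ab',\,A'A-AA')$ is exactly the negative of the commutator of the affine matrices. The map $Z\mapsto-Z$ on the matrix side, or equivalently reversing the order of the bracket, is a Lie-algebra anti-isomorphism. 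Composing with $Z\mapsto -Z^{\top}$ is not needed: $\mathfrak g\cong\mathfrak g^{\mathrm{op}}$ via $x\mapsto -x$, so we obtain a genuine isomorphism with $\euc(r_N)=\R^{r_N}\rtimes\so(r_N)$. The dimension is then $r_N+r_N(r_N-1)/2=r_N(r_N+1)/2$.

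The main care point is this sign convention between the vector-field bracket and the matrix commutator; it is resolved by the $x\mapsto-x$ anti-automorphism. Everything else is linear algebra.
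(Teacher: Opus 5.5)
Your proposal is correct and follows essentially the same route as the paper. The Killing condition reduces to $A+A^\top=0$, prediction preservation reduces to $\mathsf J_N X_{b,A}\equiv 0$ (which splits into $\mathsf J_N b=0$ and $\mathsf J_N A=0$), skew-symmetry localizes $A$ to $V_0\Omega V_0^\top$, and the affine bracket is computed directly. Your explicit sign handling — the vector-field bracket is minus the affine-matrix commutator, corrected by the anti-automorphism $x\mapsto -x$ — refines the paper's terser ``identifies the algebra'' step rather than changing the argument.
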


\begin{proof}
The flow of $X_{b,A}$ preserves $F_N$ pointwise iff
$\frac{d}{ds}F_N(\Psi_s(\theta))=\mathsf J_NX_{b,A}(\Psi_s(\theta))=0$
for all $\theta,s$, i.e.\ iff $\mathsf J_NX_{b,A}\equiv0$, which
splits into $\mathsf J_Nb=0$ and $\mathsf J_NA=0$; for the constant
metric $mI$ the Killing condition is $DX+DX^\top=A+A^\top=0$
(translations are automatically Killing). Antisymmetry forces
$\Ran(A)=\Ran(A^\top)$, so $\mathsf J_NA=0$ (i.e.\
$\Ran(A)\subseteq\Ncal_N$) localizes both the range and the support of
$A$ to $\Ncal_N$, giving $A=V_0\Omega V_0^\top$. The bracket of two
affine fields $[X,Y]=DY\cdot X-DX\cdot Y$ is again affine with the
stated components, and the map
$(b,A)\mapsto X_{b,A}$ identifies the algebra with
$\R^{r_N}\rtimes\so(r_N)$. The general-metric version, in which
rotations satisfy the mass-weighted skew condition
$A^\top M+MA=0$ and are parametrized through pseudoinverse square
roots of $M$, is Theorem~\ref{thm:general-generators} in
Appendix~\ref{app:noether}.
\end{proof}

\begin{remark}[Loss symmetries are strictly larger]
\label{rem:loss-symmetries}
Theorem~\ref{thm:affine-generators} characterizes the symmetries of
the \emph{prediction map} $F_N$, not all symmetries of the scalar
loss $\loss_N$. The latter form a strictly larger algebra whenever
$G_N$ has a repeated positive eigenvalue: a rotation inside a
degenerate transverse eigenspace changes $F_N$ but preserves the
residual norm, hence $\loss_N$, around an appropriate center, and
such generators violate $\mathsf J_NA=0$. All conclusions below use
only the prediction-preserving subalgebra, which is what the Noether analysis of the null coordinates requires. We make no claim of classifying the full loss-symmetry algebra.
\end{remark}

The translation part $\R^{r_N}$ is the geometrically primary piece: it
moves along the interpolation manifold. The rotations $\so(r_N)$ act
inside the null space around $\theta_\eq$ and will supply the fast
angular clock.

\subsection{The Soft Noether Law Is the Slow-Mode Equation}
\label{sec:soft-noether}

\begin{theorem}[Soft breaking of the translation charges]
\label{thm:soft-noether}
Let $b\in\Ncal_N$ with $\norm b=1$, and define the translation charge
and the associated null coordinate
\[
    Q_b:=\ip{p}{b}=m\dot q_b,
    \qquad
    q_b(t):=\ip{\theta(t)}{b} .
\]
Along the flow \cref{eq:hamiltonian-flow} of the linear model,
\begin{equation}
    \frac{dQ_b}{dt}
    =
    -\frac{\gamma}{m}\,Q_b
    -
    \lambda\,q_b ,
\label{eq:soft-noether-translation}
\end{equation}
equivalently
\begin{equation}
    m\ddot q_b+\gamma\dot q_b+\lambda q_b=0 .
\label{eq:charge-oscillator}
\end{equation}
Thus the softly broken conservation law of the translation charge is
literally the null-mode oscillator \cref{eq:null-oscillator}: its exact
rates are $s_\pm$ of \cref{eq:continuous-roots}, and its discrete
counterpart is the exact recurrence \cref{eq:null-recurrence} with slow
root $\rho_0$.
\end{theorem}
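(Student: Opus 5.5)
The plan is to compute $dQ_b/dt$ in two independent ways and check that they agree. The first uses the general charge-drift identity \cref{eq:charge-drift}. The second projects the equation of motion \cref{eq:damped-particle} directly onto $b$.

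\emph{Route via the charge-drift identity.} The translation generator is the constant field $X\equiv b$, which is $X_{b,0}$ in Theorem~\ref{thm:affine-generators}. Its Jacobian vanishes, so $DX+DX^\top=0$ and the kinetic term in $\{Q_X,\ham\}$ drops out. This leaves
\[
\frac{dQ_b}{dt}=-\ip{b}{\nabla U(\theta)}-\frac{\gamma}{m}Q_b .
\]
Next we expand $\nabla U(\theta)=\nabla\loss_N(\theta)+\lambda\theta$.

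\emph{Loss term.} For the linear model, $\nabla\loss_N(\theta)=G_N\theta-g_N$. Its pairing with $b$ vanishes for two reasons. First, $G_Nb=0$ because $b\in\Ncal_N$. Second, $g_N=\frac1N\Phi^\top y\in\Ran(G_N)=\Ncal_N^\perp$, so $\ip{b}{g_N}=0$. Equivalently, $\ip{b}{\nabla\loss_N}=\frac1N(\Phi\theta-y)^\top\Phi b=0$, exactly as in the proof of Proposition~\ref{prop:decoupled}. This is the exact-symmetry statement: translations along $\Ncal_N$ preserve $\loss_N$ identically.

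\emph{Weight-decay term.} What remains is $\lambda\ip{b}{\theta}=\lambda q_b$, which gives \cref{eq:soft-noether-translation}.

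The one point that needs care is the centering of the coordinate. The coordinate is $\ip{\theta}{b}$ and not $\ip{\theta-\theta_\eq}{b}$, so one might expect a cross term. Because $\PN\theta_\eq=0$ by \cref{eq:ridge-equilibrium}, the two coincide, and $q_b$ also equals the null coordinate of $\xi=\theta-\theta_\eq$ used in Section~\ref{sec:discrete-spectrum}. I will state this explicitly, since it is what lets the charge law match the recurrence \cref{eq:null-recurrence} with no inhomogeneous term.

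\emph{Passing to the oscillator.} By \cref{eq:hamiltonian-flow}, $p=m\dot\theta$, so $Q_b=m\dot q_b$ because $b$ is constant. Substituting gives $m\ddot q_b=-\gamma\dot q_b-\lambda q_b$, which is \cref{eq:charge-oscillator}. As a cross-check, the same equation follows by taking the inner product of \cref{eq:damped-particle} with $b$ and using the two cancellations above. Matching with the characteristic polynomial $ms^2+\gamma s+\lambda$ identifies the rates with $s_\pm$ of \cref{eq:continuous-roots}.

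\emph{Discrete counterpart.} Take the inner product of \cref{eq:second-order-difference} with $b$. The same cancellations reduce $-\eta\ip{b}{\nabla U(\theta_k)}$ to $-\eta\lambda q_k$, which reproduces \cref{eq:null-recurrence}. Its characteristic roots are then $\rho_0,\rho_1$ from Theorem~\ref{thm:discrete-spectrum}(ii).

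I do not expect a real obstacle, since the content is a direct calculation. The only step needing care is justifying \cref{eq:charge-drift} for a constant field (deferred to Appendix~\ref{app:noether}). Because $X$ is constant, I would bypass that identity and differentiate $\ip{p}{b}$ directly using \cref{eq:hamiltonian-flow}: this gives $\dot Q_b=\ip{-\nabla U-\frac{\gamma}{m}p}{b}$, which already yields the result.
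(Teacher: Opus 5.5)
Your proposal is correct and matches the paper's proof: apply the charge-drift identity with $DX=0$, note that $\ip{b}{\nabla\loss_N(\theta)}=\frac1N(\Phi\theta-y)^\top\Phi b=0$ identically so only the weight-decay term $\lambda q_b$ survives, and substitute $Q_b=m\dot q_b$. Your additions are also correct and only make explicit what the paper leaves implicit: $\PN\theta_\eq=0$ makes $q_b$ equal to the null coordinate of $\xi$, projecting the difference equation onto $b$ gives the null recurrence (with roots $\rho_0,\rho_1$ when $\eta\lambda<(1-\sqrt\beta)^2$), and differentiating $\ip{p}{b}$ directly is just the constant-field case of the identity.
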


\begin{proof}
By \cref{eq:charge-drift} with $DX=0$:
$\dot Q_b=-\ip{b}{\nabla U}-\frac{\gamma}{m}Q_b$, and
$\ip{b}{\nabla U(\theta)}
=\ip{b}{\nabla\loss_N(\theta)}+\lambda\ip{b}{\theta}
=0+\lambda q_b$, since
$\ip{b}{\nabla\loss_N(\theta)}=\frac1N(\Phi\theta-y)^\top\Phi b=0$
holds identically for $b\in\Ncal_N$.
\end{proof}

Three readings of \cref{eq:charge-oscillator} organize the paper:
\begin{enumerate}
\item[(i)] \emph{Unbroken limit.} For $\lambda=0$ the charge decays as
$Q_b(t)=Q_b(0)e^{-\gamma t/m}$, a purely dissipative effect of friction rather than symmetry breaking. The coordinate $q_b$ then freezes at a finite value: without weight decay, empirical-null
components never contract and the delay is infinite.
\item[(ii)] \emph{Soft breaking.} Weight decay adds the explicit
breaking term $-\lambda q_b$, the only restoring force along
$\Ncal_N$; the coordinate then contracts at the exact slow rate
$r_{\mathrm{slow}}$ of \cref{eq:rslow-exact}. In words: $\lambda$
sets the restoring \emph{scale}, while $m$ and $\gamma$ (equivalently
$\eta$ and $\beta$) determine its conversion into a rate.
\item[(iii)] \emph{Population split.} For $b\in\Ncal_\pop$ the
contraction is invisible in function space (a gauge motion); for
$b\in\Scal$ it changes the function on a set of positive
$\mu$-measure. Grokking is the soft breaking of the
\emph{population-active} translations.
\end{enumerate}

\begin{corollary}[Rotation charges: the angular clock]
\label{cor:rotation-charges}
Let $A=V_0\Omega V_0^\top$ as in
Theorem~\ref{thm:affine-generators} and
$Q_\Omega:=\ip{p}{A(\theta-\theta_\eq)}$. Then,
\begin{equation}
    \frac{dQ_\Omega}{dt}
    =
    -\frac{\gamma}{m}\,Q_\Omega ,
    \qquad\text{so}\qquad
    Q_\Omega(t)=Q_\Omega(t_0)\,e^{-\frac{\gamma}{m}(t-t_0)} :
\label{eq:angular-decay}
\end{equation}
angular motion relaxes on the clock $\tau_{\mathrm{ang}}=m/\gamma$,
independent of $\lambda$: the rotation charges are killed by
friction alone, not by the symmetry-breaking term.
\end{corollary}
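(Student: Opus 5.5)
We apply the general charge-drift identity \cref{eq:charge-drift} to the affine field $X=X_{0,A}(\theta)=A(\theta-\theta_\eq)$, with $A=V_0\Omega V_0^\top$ as in Theorem~\ref{thm:affine-generators}. The argument has two steps: show that the velocity-quadratic term vanishes, then show that the potential term vanishes.

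First, the Jacobian of $X$ is the constant matrix $DX=A$. Since $A^\top=-A$, we get $DX+DX^\top=0$, so the kinetic term $\frac m2\dot\theta^\top(DX+DX^\top)\dot\theta$ drops out identically. This is exactly the Killing property already established in Theorem~\ref{thm:affine-generators}. Hence
\[
\dot Q_\Omega=-\ip{A(\theta-\theta_\eq)}{\nabla U(\theta)}-\tfrac{\gamma}{m}Q_\Omega .
\]

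Second, we show the potential term vanishes. For the linear model $\nabla U(\theta)=(G_N+\lambda I)(\theta-\theta_\eq)$, so with $\xi=\theta-\theta_\eq$ the potential term is $\xi^\top A^\top(G_N+\lambda I)\xi$, and we split it into two parts.

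The data part vanishes because $\Ran(A)\subseteq\Ncal_N$ gives $G_NA=\frac1N\Phi^\top\Phi A=0$, so $\xi^\top A^\top G_N\xi=(G_NA\xi)^\top\xi=0$.

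The weight-decay part is $\lambda\,\xi^\top A^\top\xi$. It vanishes because a quadratic form with a skew-symmetric matrix is zero: $\xi^\top A^\top\xi=-\xi^\top A\xi=-(\xi^\top A\xi)^\top$. The centering of $X$ at $\theta_\eq$ is essential here. Since $\nabla U$ is linear in $\xi$ rather than $\theta$, a rotation about the origin would leave a cross term $\lambda\,\theta^\top A^\top\theta_\eq$. Even that term would vanish because $\PN\theta_\eq=0$ from \cref{eq:ridge-equilibrium} and $A=\PN A\PN$, but centering at $\theta_\eq$ avoids the issue altogether.

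Combining the two steps gives $\dot Q_\Omega=-\frac{\gamma}{m}Q_\Omega$, a linear scalar ODE, which integrates to \cref{eq:angular-decay} with $\tau_{\mathrm{ang}}=m/\gamma$. The rate contains no $\lambda$ because both $\lambda$-dependent contributions were annihilated.

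The only step needing care is the vanishing of the potential term. One must use the symmetry of $G_N+\lambda I$ together with the skewness of $A$. Equivalently, note that $A^\top(G_N+\lambda I)=-\lambda A$, because $V_0^\top G_N=0$ means $G_N$ contributes nothing on the range of $A$. Then $-\lambda\,\xi^\top A\xi=0$ by skewness. So the weight-decay restoring force is radial about $\theta_\eq$ within the null space and exerts no torque. Everything else is a direct substitution into \cref{eq:charge-drift}, whose derivation (Appendix~\ref{app:noether}) we take as given.
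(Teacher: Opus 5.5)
Your proof is correct and follows essentially the paper's argument: apply the charge-drift identity, drop the kinetic term because $A$ is skew, and kill the potential term using $\Ran(A)\subseteq\Ncal_N$ together with antisymmetry. The only difference is bookkeeping. You write $\nabla U=(G_N+\lambda I)\xi$, so stationarity of $\theta_\eq$ absorbs the cross term $\lambda\ip{\theta_\eq}{A\xi}$. The paper instead splits $\nabla U=\nabla\loss_N+\lambda\theta$ and kills that cross term explicitly with $\PN\theta_\eq=0$, which is the route you mention in your aside.
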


\begin{proof}
By \cref{eq:charge-drift}, $\dot Q_\Omega=-\ip{A\xi}{\nabla U}
-\frac{\gamma}{m}Q_\Omega$ with $\xi=\theta-\theta_\eq$. Since
$\Ran(A)\subseteq\Ncal_N$, $\ip{A\xi}{\nabla\loss_N}=0$ identically,
and the weight-decay term is
$\lambda\ip{\theta}{A\xi}
=\lambda\ip{\theta_\eq}{A\xi}+\lambda\ip{\xi}{A\xi}=0$:
the first summand vanishes because $A\xi\in\Ncal_N$ and
$\theta_\eq\in\Ncal_N^\perp$ (\cref{eq:ridge-equilibrium}), the second
by antisymmetry of $A$. No centering assumption is needed; the
identity $\PN\theta_\eq=0$ does the work.
\end{proof}

\begin{remark}[Exact discrete angular clock]
\label{rem:discrete-angular}
The angular clock has an exact discrete counterpart. For two null
coordinates $q_{a,k},q_{b,k}$ obeying \cref{eq:null-recurrence}, the
discrete rotation charge given by the Casoratian
\[
    Q_k
    :=
    q_{a,k}\,q_{b,k-1}-q_{b,k}\,q_{a,k-1}
\]
satisfies $Q_{k+1}=\beta\,Q_k$, i.e.\
$Q_k=\beta^{\,k-1}Q_1$: the product of the characteristic roots is
$\rho_0\rho_1=\beta$ regardless of $\eta\lambda$. The per-step angular
rate $-\log\beta$ matches the continuous rate
$(\gamma/m)\Delta t=2(1-\beta)/(1+\beta)$ as $\beta\to1$. On the slow
manifold all null coordinates decay at the common rate $\rho_0$, so
rotation charges are carried entirely by the fast transient: angular
motion dies long before the grokking coordinate moves appreciably.
This is verified to high precision in Section~\ref{sec:experiments}.
\end{remark}

\begin{remark}[Isotropy of the angular multiplet]
\label{rem:multiplet}
The $X_{0,A}$ realize $\so(r_N)$,
$[X_\Omega,X_{\Omega'}]=X_{[\Omega,\Omega']}$; isotropic friction is
scalar on the multiplet, so all $\binom{r_N}{2}$ rotation charges decay
at the single rate $\gamma/m$. There is one angular clock, not a
spectrum.
\end{remark}

\section{Population-Risk Tail and the Grokking Time}
\label{sec:population-tail}

We now compute the late-time behavior of the population risk along the
exact trajectories of Section~\ref{sec:exact-linear}. Two framing
points first, both essential to reading the results correctly.
First, the quantity analyzed is the signed excess
$\loss_\pop(\theta_k)-\loss_\pop(\theta_\eq)$ relative to the
regularized training equilibrium: $\theta_\eq$ is not, in general, a
population optimum or even a population stationary point, so a
decaying excess means relaxation toward $\theta_\eq$, not
automatically an improvement over all comparators; the connection to
the population optimum requires the extra hypotheses of
Corollary~\ref{cor:monotone}. Second, ``only the grokking
subspace contributes'' is a statement about the \emph{slow
empirical-null component} of the asymptotic tail; transverse
directions still affect the population risk at finite times, through
the faster remainder. We collect the standing assumptions explicitly.

\begin{assumption}[Exact linear regime]
\label{ass:linear-regime}
The model is linear \cref{eq:linear-model} with squared loss
\cref{eq:losses}; $y_\star\in L^2_\mu$, so that $\loss_\pop$ is exactly
quadratic with $\nabla^2\loss_\pop=G_\pop$.
\end{assumption}

\begin{assumption}[Sampling]
\label{ass:sampling}
$\Ncal_\pop\subseteq\Ncal_N$; by Lemma~\ref{lem:kernel-inclusion} this
holds almost surely when the training inputs are drawn i.i.d.\ from
$\mu$, and we adopt it on that event.
\end{assumption}

\begin{assumption}[Weak regularization and no sign flips]
\label{ass:weak-reg}
$\eta\lambda<(1-\sqrt\beta)^2$ and
$\eta(h_j+\lambda)\le(1+\sqrt\beta)^2$ for all transverse eigenvalues
$h_j>0$ of $G_N$.
\end{assumption}

\begin{assumption}[Tail separation]
\label{ass:tail-separation}
$\varrho:=\max\bigl(\rho_1,\ \max_{j:h_j>0}|\rho_j^\pm|\bigr)<\rho_0^2$.
In the weak-regularization asymptotics this is implied by
$2\lambda\lesssim h_{\min}$ together with
$2\eta\lambda/(1-\beta)<-\log\beta$, i.e.\ weight decay is weak
relative to both the empirical curvature and the momentum friction.
\end{assumption}

Under Assumption~\ref{ass:weak-reg}, Theorem~\ref{thm:discrete-spectrum}
gives the strict hierarchy $|\rho_j^\pm|<\rho_0$;
Assumption~\ref{ass:tail-separation} strengthens it so that all fast
contributions decay below even the squared slow factor
$\rho_0^{2k}$, which is what a clean tail statement requires.

\begin{theorem}[Population-risk tail from the kernel mismatch]
\label{thm:population-tail}
Let Assumptions~\ref{ass:linear-regime}--\ref{ass:tail-separation}
hold, and write the null-block solution of
Theorem~\ref{thm:discrete-spectrum} as
$\PN\xi_k=\rho_0^kw_0+\rho_1^kw_1$. Define
\[
    \nabla_\pop:=\nabla\loss_\pop(\theta_\eq)
    =G_\pop\theta_\eq-g_\pop
    \ \in\ \Ncal_\pop^{\perp},
    \qquad
    s_0:=\Pg w_0 .
\]
Then there are constants $C>0$ (explicit in the proof) such that for
all $k\ge0$
\begin{equation}
    \loss_\pop(\theta_k)-\loss_\pop(\theta_\eq)
    =
    A\,\rho_0^{\,k}
    +
    B\,\rho_0^{\,2k}
    +
    R_k,
    \qquad
    |R_k|\le C\varrho^{\,k} ,
\label{eq:population-tail}
\end{equation}
with
\begin{equation}
    A=\ip{\nabla_\pop}{s_0}
    =\ip{\Pg\nabla_\pop}{w_0},
    \qquad
    B=\tfrac12\,s_0^\top G_\pop\,s_0\;\ge\;0 ,
\label{eq:tail-coefficients}
\end{equation}
and $B>0$ whenever $s_0\neq0$, since $G_\pop$ is positive definite on
$\Scal$. Moreover the training loss contains \emph{no} slow component
at all:
\begin{equation}
    \loss_N(\theta_k)-\loss_N(\theta_\eq)
    =
    O\!\bigl(\bar\rho^{\,k}\bigr),
    \qquad
    \bar\rho:=\max_{j:h_j>0}|\rho_j^\pm|<\rho_0 ,
\label{eq:training-blind}
\end{equation}
because $G_Nw=0$ and
$\ip{\nabla\loss_N(\theta_\eq)}{w}=-\lambda\ip{\theta_\eq}{w}=0$ for
every $w\in\Ncal_N$. Two regimes follow:
\begin{enumerate}
\item[(i)] \textbf{Generic tail.} If
$\Pg\nabla_\pop\neq0$ and $A\neq0$, the signed excess is
$A\rho_0^{k}(1+o(1))$; $A$ may take either sign, so neither the sign
nor the monotonicity of the approach is guaranteed: the
population risk can cross its limiting value and approach it from
below. ``Decay'' here refers to $|{\cdot}|\to0$, not to monotone
improvement.
\item[(ii)] \textbf{Compatible tail.} If the \emph{compatibility
condition}
\begin{equation}
    \Pg\nabla\loss_\pop(\theta_\eq)=0
\label{eq:compatibility}
\end{equation}
holds, then $A=0$ for every initial condition and
\begin{equation}
    \loss_\pop(\theta_k)-\loss_\pop(\theta_\eq)
    =
    B\,\rho_0^{\,2k}
    +O(\varrho^{\,k}),
    \qquad
    B=\tfrac12\,s_0^\top G_\pop s_0 .
\label{eq:compatible-tail}
\end{equation}
\end{enumerate}
\end{theorem}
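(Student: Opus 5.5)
Since $\loss_\pop$ is exactly quadratic with Hessian $G_\pop$ (Assumption~\ref{ass:linear-regime}), I would start from the exact Taylor identity
\[
\loss_\pop(\theta_k)-\loss_\pop(\theta_\eq)=\ip{\nabla_\pop}{\xi_k}+\tfrac12\,\xi_k^\top G_\pop\,\xi_k ,
\qquad \xi_k=\theta_k-\theta_\eq .
\]
Split $\xi_k$ along the orthogonal decomposition \cref{eq:decomposition} as
\[
\xi_k=\Pperp\xi_k+\Pg\xi_k+\Pzero\xi_k .
\]
By Assumption~\ref{ass:sampling}, $\Ncal_\pop\subseteq\Ncal_N$, so $\Pzero$ is well defined and $G_\pop\Pzero=0$. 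The vector $\nabla_\pop=G_\pop\theta_\eq-g_\pop$ lies in $\Ncal_\pop^\perp$: $\Ran G_\pop=\Ncal_\pop^\perp$, and $g_\pop\perp\Ncal_\pop$ because $\phi(x)^\top v=0$ $\mu$-a.e.\ for $v\in\Ncal_\pop$. Hence the $\Pzero$ component drops out of both terms entirely. This is the first point where the population-null directions disappear from the tail.

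Next I would insert the mode decomposition of Theorem~\ref{thm:discrete-spectrum}. The null block is $\PN\xi_k=\rho_0^kw_0+\rho_1^kw_1$. The transverse block $\Pperp\xi_k$ is a finite sum of modes with $|\rho_j^\pm|\le\bar\rho$. Under Assumption~\ref{ass:weak-reg}, part (iii) of that theorem gives $\bar\rho<\rho_0$. In the confluent case $a_j^2=4\beta$ there is an extra factor of $k$, so I would absorb it by bounding $k|\rho|^k\le C'\varrho'^k$ for a slightly larger rate that is still below $\rho_0^2$; this is possible because the inequality in Assumption~\ref{ass:tail-separation} is strict.

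Write $\Pg\xi_k=\rho_0^ks_0+\rho_1^k\Pg w_1$ and $\Pperp\xi_k=:\tau_k$. Expanding the two terms of the Taylor identity gives:
\begin{itemize}
\item[(a)] the linear term $\rho_0^k\ip{\nabla_\pop}{s_0}+\rho_1^k\ip{\nabla_\pop}{\Pg w_1}+\ip{\nabla_\pop}{\tau_k}$;
\item[(b)] the quadratic diagonal term $\tfrac12\rho_0^{2k}s_0^\top G_\pop s_0$;
\item[(c)] cross and higher terms, with rates $\rho_0^k\rho_1^k$, $\rho_1^{2k}$, $\rho_0^k|\tau_k|$, $|\tau_k|^2$, and so on.
\end{itemize}
Every rate in (c), and the non-leading parts of (a), is at most $\varrho^k$ up to the confluent factor, since $\rho_1,\bar\rho\le\varrho$ and $\rho_0<1$. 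So all of them go into $R_k$, with $C$ built from $\norm{\nabla_\pop}$, $\norm{G_\pop}$, $\norm{w_1}$ and the transverse amplitudes. This also explains why the theorem needs $\varrho<\rho_0^2$: it keeps $R_k$ below the $B\rho_0^{2k}$ term. The identity $\ip{\nabla_\pop}{s_0}=\ip{\Pg\nabla_\pop}{w_0}$ is just self-adjointness of $\Pg$. For positivity of $B$, take $v\in\Scal$ nonzero; then $v\notin\Ncal_\pop=\Ker G_\pop$, so $v^\top G_\pop v>0$, and $B>0$ whenever $s_0\ne0$.

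For the training loss I would expand $\loss_N$ the same way:
\[
\loss_N(\theta_k)-\loss_N(\theta_\eq)=\ip{\nabla\loss_N(\theta_\eq)}{\xi_k}+\tfrac12\,\xi_k^\top G_N\xi_k .
\]
Stationarity of $U$ gives $\nabla\loss_N(\theta_\eq)=-\lambda\theta_\eq\in\Ncal_N^\perp$ by \cref{eq:ridge-equilibrium}, and $G_N$ annihilates $\Ncal_N$. So only $\tau_k=\Pperp\xi_k$ enters, which gives \cref{eq:training-blind}. Regime (i) then follows by dividing through by $\rho_0^k$, using $\varrho<\rho_0^2<\rho_0$. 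For regime (ii), compatibility forces $A=0$ for every $w_0$; moreover it kills the $\rho_1^k\ip{\nabla_\pop}{\Pg w_1}$ term as well, though that term was already $O(\varrho^k)$. This leaves \cref{eq:compatible-tail}.

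The one genuinely delicate step is the bookkeeping of the remainder. I need to check that each cross term, especially the $\rho_0^k\rho_1^k$ term and the $\rho_0^k\tau_k$ term, is dominated by $\varrho^k$ rather than merely by $\rho_0^k\varrho^k$. It is, since $\rho_0<1$. I also need to handle the confluent and complex-pair cases uniformly, which I would do with a single bound $\norm{\tau_k}\le C''(k+1)\bar\rho^{\,k}$ and the strict gap in Assumption~\ref{ass:tail-separation}.
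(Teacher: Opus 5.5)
Your proposal is correct and follows the paper's own proof essentially step for step: the exact quadratic identity for $\loss_\pop$, insertion of the mode decomposition, removal of the $\Ncal_\pop$ component via $\nabla_\pop\perp\Ncal_\pop$ and $G_\pop\Pzero=0$, domination of every remaining product by $\varrho^k$ using $\rho_0\rho_1=\beta\le\rho_1$ and $\rho_0<1$, and training blindness from $\nabla\loss_N(\theta_\eq)=-\lambda\theta_\eq\in\Ncal_N^\perp$. Your explicit check that $g_\pop\perp\Ncal_\pop$ and your handling of confluent roots by slightly enlarging the rate are the same small points the paper either asserts or handles by enlarging $\bar\rho$ infinitesimally.
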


\begin{proof}[Proof sketch; details in Appendix~\ref{app:proofs-tail}]
$\loss_\pop$ is exactly quadratic, so
$\loss_\pop(\theta_k)-\loss_\pop(\theta_\eq)
=\ip{\nabla_\pop}{\xi_k}+\frac12\xi_k^\top G_\pop\xi_k$ with no
remainder. Insert the exact mode decomposition. In the linear term,
$\nabla_\pop\perp\Ncal_\pop$ kills the population-null part of $w_0$,
leaving $A\rho_0^k$ plus terms bounded by $\varrho^k$ (the $w_1$
contribution and the transverse modes). In the quadratic term, the
slow--slow contribution is
$\frac12w_0^\top G_\pop w_0\,\rho_0^{2k}=B\rho_0^{2k}$ because
$G_\pop$ annihilates the $\Ncal_\pop$-component of $w_0$ and
$\Scal\perp\Ncal_\pop$ removes the cross term. Every remaining product
of mode factors is one of
$\rho_1^k,\ \bar\rho^{\,k},\ (\rho_0\rho_1)^k,\ (\rho_0\bar\rho)^k,\
\rho_1^{2k},\ (\rho_1\bar\rho)^k,\ \bar\rho^{\,2k}$, and each is
bounded by $\varrho^{\,k}$: indeed $\rho_1\le\varrho$ and
$\bar\rho\le\varrho$ by definition, $\rho_0\rho_1=\beta\le\rho_1$, and
$\rho_0\bar\rho<\bar\rho$ since $\rho_0<1$. The role of
Assumption~\ref{ass:tail-separation} is only to make
$\varrho^{\,k}=o(\rho_0^{2k})$, so that $R_k$ is subleading.
Positive definiteness of $G_\pop$ on $\Scal$: for
$s\in\Scal\setminus\{0\}$, $s\perp\Ncal_\pop=\Ker G_\pop$ gives
$s^\top G_\pop s>0$.
\end{proof}

\begin{remark}[When does compatibility hold?]
\label{rem:compatibility}
Condition \cref{eq:compatibility} formalizes the statement ``the
regularized training equilibrium is population-stationary along the
grokking subspace.'' It is an assumption about the data distribution,
not a consequence of optimization, and we keep it visible rather than
folding it into the setup. Two useful sufficient conditions:
\begin{enumerate}
\item[(a)] \emph{Realizable, training-identified teacher.} If
$y_i=\phi_i^\top\theta_T$ and $y_\star=\phi^\top\theta_T$ with
$\theta_T\in\Ncal_N^\perp$, then
$\nabla_\pop=G_\pop(\theta_\eq-\theta_T)$ with
$\theta_\eq-\theta_T=-\lambda(G_N+\lambda I)^{-1}\theta_T=O(\lambda)$,
so $A=O(\lambda\norm{\theta_T}\norm{w_0})$: the linear term is
weak-regularization suppressed, and the $\rho_0^{2k}$ law of
\cref{eq:compatible-tail} holds over $\sim\log(1/\lambda)$ decades
before the residual $\rho_0^k$ term can surface.
\item[(b)] \emph{Isotropic population geometry.} If
$G_\pop$ acts as a multiple of the identity on
$\Ran G_\pop\supseteq\Scal\oplus(\Ncal_N^\perp\cap\Ran G_\pop)$ (e.g.\ whitened features), then
$\nabla_\pop\in\Ncal_N^\perp$ implies
$\Pg\nabla_\pop=0$. This is the regime of the synthetic
experiments in Section~\ref{sec:experiments}, where
\cref{eq:compatibility} holds with no $O(\lambda)$ residue.
\end{enumerate}
Without any such condition, Theorem~\ref{thm:population-tail}(i) still
applies, but only in the weaker sense that the \emph{signed} excess is
dominated by a $\rho_0^{k}$ term whose sign and monotonicity depend on
$A$. The dichotomy itself ($\rho_0^{2k}$ versus $\rho_0^{k}$,
decided by $\Pg\nabla\loss_\pop(\theta_\eq)$) is a falsifiable
prediction, verified in Figure~\ref{fig:rates}(b).
\end{remark}

\begin{corollary}[Excess-risk relaxation time on the weight-decay clock]
\label{cor:grokking-time}
Assume the setting of Theorem~\ref{thm:population-tail}(ii) and
$s_0\neq0$. Then the number of iterations after which the excess
population risk (relative to $\theta_\eq$; this is an excess-risk
threshold, not an accuracy threshold) stays below $\varepsilon$ obeys
\begin{equation}
    k_{\mathrm{grok}}^{\pop}(\varepsilon)
    =
    \frac{\log\bigl(B/\varepsilon\bigr)}{2\,(-\log\rho_0)}
    +O(1)
    \;\approx\;
    \frac{1-\beta}{2\eta\lambda}\,
    \log\frac{B}{\varepsilon} ,
\label{eq:kgrok-pop}
\end{equation}
consistent with the parameter-space clock
\cref{eq:kgrok-headline} (the factor $2$ converts a threshold on the
squared quantity into a threshold on the norm). In continuous time,
\begin{equation}
    \tau_{\mathrm{grok}}(\varepsilon)
    =
    \frac{1}{2\,r_{\mathrm{slow}}}\,
    \log\frac{B}{\varepsilon}
    \;=\;
    \frac{\gamma}{2\lambda}\,
    \log\frac{B}{\varepsilon}\,
    \bigl(1+O(m\lambda/\gamma^2)\bigr)
    \;\propto\;
    \frac{1-\beta}{\lambda\sqrt\eta}\,,
\label{eq:tau-grok}
\end{equation}
so the relaxation is slow when weight decay is weak, and the
delay diverges as $\lambda\to0$, when the softly broken translation
symmetry is restored.
\end{corollary}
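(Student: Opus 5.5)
The argument inverts the compatible-tail expansion \cref{eq:compatible-tail} of Theorem~\ref{thm:population-tail}(ii). Write $E_k:=\loss_\pop(\theta_k)-\loss_\pop(\theta_\eq)=B\rho_0^{2k}+R_k$ with $|R_k|\le C\varrho^k$. Since $s_0\neq0$, we have $B>0$. Assumption~\ref{ass:tail-separation} gives $\varrho<\rho_0^2$, so the relative error $\delta_k:=R_k/(B\rho_0^{2k})$ obeys $|\delta_k|\le (C/B)(\varrho/\rho_0^2)^k\to0$ geometrically. Then $E_k=B\rho_0^{2k}(1+\delta_k)$, and eventually $E_k>0$ and $\log E_k=\log B+2k\log\rho_0+O(|\delta_k|)$.

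Next, define $k^\pop_{\mathrm{grok}}(\varepsilon)$ as the last-entry time: the least $k$ such that $E_{k'}\le\varepsilon$ for all $k'\ge k$. Set $k_\ast:=\log(B/\varepsilon)/(2(-\log\rho_0))$. For the upper bound, choose an integer $K_0$, independent of $\varepsilon$, with $|\delta_k|\le\frac12$ for $k\ge K_0$. For $k\ge\max(K_0,k_\ast+c)$, with $c:=\log(3/2)/(2(-\log\rho_0))$, we get $E_k\le\frac32B\rho_0^{2k}\le\varepsilon$. For the lower bound, any $k\ge K_0$ with $k\le k_\ast-c'$, where $c':=\log 2/(2(-\log\rho_0))$, has $E_k\ge\frac12B\rho_0^{2k}\ge\varepsilon$, and with strict inequality once $c'$ is slightly enlarged. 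So the crossing lies in $[k_\ast-c',k_\ast+c]$ whenever $k_\ast\ge K_0+c'$.

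The main obstacle is early times $k<K_0$, where the fast remainder may be large and $E_k$ can dip below $\varepsilon$ and rebound. Using the last-entry definition, which matches the wording ``stays below'', makes this harmless: the lower bound only needs one late $k$ with $E_k>\varepsilon$. For $\varepsilon$ so large that $k_\ast<K_0+c'$, the count is at most $K_0+c$ by the upper bound anyway, and that is $O(1)$. So in every case $k^\pop_{\mathrm{grok}}(\varepsilon)=k_\ast+O(1)$, with the constant depending on $C/B$, $\varrho/\rho_0^2$ and $\rho_0$ but not on $\varepsilon$. This proves the first equality in \cref{eq:kgrok-pop}.

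The approximation in \cref{eq:kgrok-pop} follows by substituting $-\log\rho_0=(1-\rho_0)+O((1-\rho_0)^2)$ and the expansion \cref{eq:slow-root-expansion}, which give $-\log\rho_0=\frac{\eta\lambda}{1-\beta}(1+O(\eta\lambda/(1-\beta)^2))$ in the regime $\eta\lambda\ll(1-\beta)^2$. The comparison with \cref{eq:kgrok-headline} is immediate, since $B\propto\norm{s_0}^2$ and the exponent is $2k$. For the continuous-time statement \cref{eq:tau-grok}, we argue identically for the modified equation. Restricting \cref{eq:damped-particle} to the linear model, the null coordinates obey \cref{eq:charge-oscillator} by Theorem~\ref{thm:soft-noether}, so they decay as $e^{-r_{\mathrm{slow}}t}$ plus $e^{s_-t}$ terms. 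The transverse modes decay faster under the analogous separation hypothesis. The same quadratic expansion then gives $B e^{-2r_{\mathrm{slow}}t}(1+o(1))$, and inverting yields $\tau_{\mathrm{grok}}=\frac{1}{2r_{\mathrm{slow}}}\log(B/\varepsilon)+O(1)$. Inserting \cref{eq:rslow-asymptotic}, namely $1/r_{\mathrm{slow}}=\frac{\gamma}{\lambda}(1+O(m\lambda/\gamma^2))$, together with $\gamma=(1-\beta)/\sqrt\eta$, gives the stated proportionality. The divergence as $\lambda\to0$ is then read off directly.
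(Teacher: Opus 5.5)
Your proposal is correct and follows essentially the paper's route: with $A=0$ you invert the compatible tail $B\rho_0^{2k}+O(\varrho^{k})$ by bracketing the excess within a factor $[\tfrac12,\tfrac32]$ once the fast remainder is subleading, exactly the pattern of the proof of Corollary~\ref{cor:iteration-clock}, and then substitute the expansions of $-\log\rho_0$ and $r_{\mathrm{slow}}$. The paper states this corollary without a separate proof, and your version is somewhat more careful: it uses the last-entry definition, handles early-time excursions, flags the separation hypothesis needed in continuous time, and writes $+O(1)$ rather than equality there. One caveat: your $O(1)$ bound tacitly needs $\varepsilon\lesssim B$, since otherwise $k_\ast$ is negative and the difference $k^{\pop}_{\mathrm{grok}}-k_\ast$ grows as $\varepsilon\to\infty$.
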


Corollary~\ref{cor:grokking-time} concerns relaxation toward
$\theta_\eq$. The bridge to actual generalization improvement
requires knowing where $\theta_\eq$ stands relative to the population
optimum; the following realizable case is the cleanest.

\begin{corollary}[Monotone approach to the population optimum,
realizable case]
\label{cor:monotone}
In the setting of Remark~\ref{rem:compatibility}(a) (realizable,
training-identified teacher: $y_i=\phi_i^\top\theta_T$,
$y_\star=\phi^\top\theta_T$, $\theta_T\in\Ncal_N^\perp$), the
population optimum is $\loss_\pop^\star=0$ and
\begin{equation}
    \loss_\pop(\theta_k)-\loss_\pop^\star
    =
    \underbrace{\loss_\pop(\theta_\eq)}_{O(\lambda^2\norm{\theta_T}^2)}
    \;+\;
    B\,\rho_0^{2k}
    \;+\;
    O(\lambda)\,\rho_0^{k}
    \;+\;
    O(\varrho^{\,k}),
\label{eq:optimum-excess}
\end{equation}
with
$\loss_\pop(\theta_\eq)
\le\tfrac12\lambda_{\max}(G_\pop)\,
\lambda^2\norm{(G_N+\lambda I)^{-1}\theta_T}^2$. Three consequences:
the excess over the true optimum is nonnegative throughout (since
$\loss_\pop^\star$ is the global minimum); it decays to an
$O(\lambda^2)$ ridge floor; and the decay follows the doubled-rate
law $B\rho_0^{2k}$ over the $\sim\log(1/\lambda)$ decades during
which $B\rho_0^{2k}\gtrsim\lambda\rho_0^{k}$, after which the
weak-$\lambda$-suppressed $\rho_0^{k}$ correction governs the final
approach to the floor. Outside the realizable case no improvement
claim relative to the optimum is made.
\end{corollary}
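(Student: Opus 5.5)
The plan is to expand the population risk exactly around $\theta_\eq$ and then control each piece using the realizable structure. Since the teacher is realizable, $y_\star=\phi^\top\theta_T$, so $\loss_\pop(\theta)=\tfrac12(\theta-\theta_T)^\top G_\pop(\theta-\theta_T)\ge0$ and $\loss_\pop^\star=0$ is attained at $\theta_T$. Nonnegativity of the excess over the optimum at every $k$ is then immediate, with no appeal to the tail expansion.

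\textbf{The ridge floor.} I would compute $\theta_\eq-\theta_T$ directly. Here $g_N=G_N\theta_T$, so
\[
\theta_\eq-\theta_T=(G_N+\lambda I)^{-1}G_N\theta_T-\theta_T=-\lambda(G_N+\lambda I)^{-1}\theta_T .
\]
This gives
\[
\loss_\pop(\theta_\eq)=\tfrac12\lambda^2\,\theta_T^\top(G_N+\lambda I)^{-1}G_\pop(G_N+\lambda I)^{-1}\theta_T\le\tfrac12\lambda_{\max}(G_\pop)\lambda^2\norm{(G_N+\lambda I)^{-1}\theta_T}^2 .
\]
Because $\theta_T\in\Ncal_N^\perp$, the resolvent acts only on $\Ncal_N^\perp$, where it is bounded by $1/(h_{\min}+\lambda)$. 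So this term is $O(\lambda^2\norm{\theta_T}^2)$, uniformly as $\lambda\to0$.

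\textbf{The tail terms.} I would write
\[
\loss_\pop(\theta_k)-\loss_\pop^\star=\loss_\pop(\theta_\eq)+\bigl[\loss_\pop(\theta_k)-\loss_\pop(\theta_\eq)\bigr]
\]
and apply Theorem~\ref{thm:population-tail} to the bracket. That yields $A\rho_0^k+B\rho_0^{2k}+R_k$ with $|R_k|\le C\varrho^k$. By Remark~\ref{rem:compatibility}(a), $\nabla_\pop=G_\pop(\theta_\eq-\theta_T)=-\lambda G_\pop(G_N+\lambda I)^{-1}\theta_T$, hence
\[
|A|=|\ip{\nabla_\pop}{s_0}|\le\lambda\,\lambda_{\max}(G_\pop)\,h_{\min}^{-1}\norm{\theta_T}\norm{s_0}=O(\lambda).
\]
This produces the $O(\lambda)\rho_0^k$ term. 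Here Theorem~\ref{thm:population-tail} is used in its general form (i), which requires no compatibility assumption, with $A$ simply bounded rather than set to zero.

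\textbf{The crossover window.} The doubled-rate law is visible while $B\rho_0^{2k}\gtrsim|A|\rho_0^k$, i.e.\ while $\rho_0^k\gtrsim|A|/B=O(\lambda)$. That is a window of about $\log(1/\lambda)$ decades in the excess. After it the $\rho_0^k$ correction dominates until both terms drop below the $O(\lambda^2)$ floor.

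The main obstacle is uniformity in $\lambda$. The constants $B$, $s_0$, and $C$ depend on $\lambda$ through $\rho_0$, $w_0$, and $\theta_\eq$, and the window statement needs $B$ bounded below independently of $\lambda$. I would handle this by noting that $w_0$ is a fixed linear function of the initial data $(\xi_0,\xi_{-1})$ with coefficients $(\rho_1\cdot-\,\cdot)/(\rho_1-\rho_0)$. The gap $\rho_0-\rho_1$ stays bounded away from zero as $\lambda\to0$, since $\rho_1\to\beta$ and $\rho_0\to1$. So $s_0$ converges to a nonzero limit whenever the initial grokking component is nonzero, which makes $B$ and the bound on $|A|/B$ uniform for small $\lambda$.
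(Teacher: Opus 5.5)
Your proposal is correct and follows the paper's proof essentially step for step. Both use the exact quadratic form $\tfrac12(\theta-\theta_T)^\top G_\pop(\theta-\theta_T)$, the identity $\theta_\eq-\theta_T=-\lambda(G_N+\lambda I)^{-1}\theta_T$ for the ridge floor, Theorem~\ref{thm:population-tail} with $A=O(\lambda)$ taken from Remark~\ref{rem:compatibility}(a), and the crossover at $\rho_0^k\asymp|A|/B$. Your explicit $h_{\min}^{-1}$ constants and the uniformity-in-$\lambda$ check on $s_0$ go beyond what the paper writes, since it leaves that uniformity implicit. One small correction to that extra check: the $\lambda\to0$ limit of $s_0$ is $\Pg\bigl(\theta_0+\tfrac{\beta}{1-\beta}v_0\bigr)$, so the condition you need is that this vector is nonzero, not merely $\Pg\theta_0\neq0$.
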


\begin{proof}
$\loss_\pop(\theta)=\tfrac12(\theta-\theta_T)^\top G_\pop(\theta-\theta_T)$
and $\theta_\eq-\theta_T=-\lambda(G_N+\lambda I)^{-1}\theta_T$ give
the floor bound; adding \cref{eq:population-tail} with
$A=O(\lambda\norm{\theta_T}\norm{w_0})$
(Remark~\ref{rem:compatibility}(a)) gives
\cref{eq:optimum-excess}, and nonnegativity is immediate from
$\loss_\pop\ge0=\loss_\pop^\star$. The crossover between the
$\rho_0^{2k}$ and $\lambda\rho_0^{k}$ terms occurs at
$\rho_0^{k}\asymp\lambda/B$, i.e.\ after
$\log(B/\lambda)/(-\log\rho_0)$ iterations.
\end{proof}

The three clocks of the theory are summarized, under
Assumptions~\ref{ass:weak-reg}--\ref{ass:tail-separation} and in the
overdamped transverse regime $\gamma^2\ge4m(h_j+\lambda)$, by
\begin{equation}
    \tau_{\mathrm{ang}}=\frac{m}{\gamma}
    \;\le\;
    \tau_{\perp,j}\simeq\frac{\gamma}{h_j+\lambda}
    \;\ll\;
    \tau_{\parallel}=\frac{1}{r_{\mathrm{slow}}}
    \simeq\frac{\gamma}{\lambda} ,
\label{eq:three-clocks}
\end{equation}
where the middle clock is replaced by $2m/\gamma$ in the underdamped
case, which only sharpens the separation, and the last separation
requires $\lambda\ll h_j$. The training loss runs on the first two
clocks and is blind to the third \cref{eq:training-blind};
the population loss runs on the third through its grokking component
alone \cref{eq:population-tail}.

\begin{remark}[Causal and falsifiable predictions]
\label{rem:predictions}
The theory makes four predictions that go beyond the
$\tau\propto\gamma/\lambda$ scaling shared with earlier accounts:
\begin{enumerate}
\item[(P1)] \emph{Collapse.} Measured grokking iteration counts
across $(\eta,\lambda,\beta)$ collapse onto the exact prediction
$\log(\norm{q_0}/\varepsilon)/(-\log\rho_0)$ with $\rho_0$ from
\cref{eq:slow-root}. A power law in $\lambda$ alone does not reproduce the collapse.
\item[(P2)] \emph{Intervention.} Rescaling the grokking component
$\Pg\xi$ at any post-interpolation time by $c$ leaves every
training prediction unchanged and shifts the grokking iteration by
exactly $\log c/(-\log\rho_0)$.
\item[(P3)] \emph{Gauge control.} Perturbations along $\Ncal_\pop$
change neither training nor population predictions; perturbations
along $\Scal$ change population predictions only; perturbations along
$\Ncal_N^\perp$ change both.
\item[(P4)] \emph{Optimizer dependence.} Replacing coupled $L_2$ by
decoupled weight decay multiplies the grokking iteration count by
$(1-\beta)^{-1}$ at fixed $(\eta,\lambda,\beta)$
(Proposition~\ref{prop:decoupled}).
\end{enumerate}
All four identities are verified, with no fitted parameters, in the
exactly solvable synthetic model of Section~\ref{sec:experiments};
the modular-addition benchmark of Section~\ref{sec:modular} provides
a first nonlinear test of (P1). Nonlinear validity of (P2)--(P4)
remains open.
\end{remark}

\section{Local Nonlinear Extension}
\label{sec:nonlinear}

For nonlinear networks the results above become local statements around
an interpolating region, and every first-order identity acquires a
remainder. This section states the assumptions under which the exact
linear theory survives as a controlled approximation, with the errors
displayed rather than absorbed silently.

\subsection{Assumptions with Explicit Remainders}

Let $\theta_\star$ be a reference parameter (the relevant choices were
separated in Section~\ref{sec:reference-points}) and
$\xi:=\theta-\theta_\star$. Stack the training outputs into
$F_N(\theta)\in\R^N$.

\begin{assumption}[Training-side local linearization, constant rank]
\label{ass:local-linear}
There are a neighborhood $\mathcal U$ of $\theta_\star$ and a constant
$C_N$ such that
\[
    \norm{F_N(\theta)-F_N(\theta_\star)-\mathsf J_N\,\xi}
    \le
    C_N\norm{\xi}^2,
    \qquad
    \mathsf J_N:=\nabla_\theta F_N(\theta_\star),
\]
for all $\theta\in\mathcal U$, and $\nabla_\theta F_N(\theta)$ has
constant rank on $\mathcal U$.
\end{assumption}

\begin{assumption}[Population-side local linearity]
\label{ass:pop-linear}
There is a measurable $c:\R^{n_0}\to[0,\infty)$ with
$\E_\mu[c(x)^2]=:C_\pop^2<\infty$ such that for $\mu$-a.e.\ $x$ and all
$\theta\in\mathcal U$,
\[
    \bigl|f(x;\theta)-f(x;\theta_\star)-J_x(\theta_\star)\,\xi\bigr|
    \le
    c(x)\norm{\xi}^2 .
\]
\end{assumption}

\begin{assumption}[Population residual]
\label{ass:pop-residual}
The population residual $r_\star(x):=f(x;\theta_\star)-y_\star(x)$
satisfies $\E_\mu[r_\star^2]=:\varepsilon_\star^2<\infty$.
\end{assumption}

Assumption~\ref{ass:pop-linear} is the population counterpart of
Assumption~\ref{ass:local-linear}: the
expansion of $\loss_\pop$ requires control of the linearization error
for $\mu$-almost every input, not only on the training set. Under
Assumptions~\ref{ass:pop-linear}--\ref{ass:pop-residual}, a direct
computation with Cauchy--Schwarz gives, for $\theta\in\mathcal U$,
\begin{equation}
    \Bigl|
    \loss_\pop(\theta)-\loss_\pop(\theta_\star)
    -\ip{\nabla\loss_\pop(\theta_\star)}{\xi}
    -\tfrac12\,\xi^\top G_\pop(\theta_\star)\,\xi
    \Bigr|
    \le
    \varepsilon_\star C_\pop\norm{\xi}^2
    +C'\norm{\xi}^3
    +\tfrac12C_\pop^2\norm{\xi}^4 ,
\label{eq:pop-expansion}
\end{equation}
with $C'=C_\pop\sqrt{\lambda_{\max}(G_\pop)}$: the population Hessian
agrees with the Gram matrix only up to the residual--curvature term,
$\nabla^2\loss_\pop(\theta_\star)=G_\pop+O(\varepsilon_\star C_\pop)$,
which vanishes in the population-realizable case
$\varepsilon_\star=0$. For the squared loss the same mechanism on the
training side gives $\nabla^2\loss_N(\theta_\star)=G_N$  at an
interpolator, since the residual factor is zero there.

\subsection{Perturbed Rates and Scope}

\begin{assumption}[Spectral gap and Jacobian regularity]
\label{ass:gap}
On $\mathcal U$, the nonzero singular values of
$\nabla_\theta F_N(\theta)$ are bounded below by $\sigma_0>0$, and
$\theta\mapsto\nabla_\theta F_N(\theta)$ is $L_J$-Lipschitz. (The
constant-rank hypothesis of Assumption~\ref{ass:local-linear} alone
does \emph{not} control the rotation of $\Ncal_N(\theta)$; the gap
$\sigma_0$ and the constant $L_J$ enter the Davis--Kahan-type bound
$\norm{\PN(\theta)-\PN(\theta_\star)}\le 2L_J\norm\xi/\sigma_0$ that
does.)
\end{assumption}

\begin{proposition}[Local relaxation of the grokking component, with
error floor]
\label{prop:rate-bracket}
Let Assumptions~\ref{ass:local-linear}--\ref{ass:gap} hold at
$\theta_\star=\theta_\eq$ (the minimizer of $U$, assumed to lie in
$\mathcal U$ with $\norm{\PN\theta_\eq}\le c_0\lambda$), and consider
the overdamped continuous dynamics on a window $[t_0,T]$ during which
$\sup_{t}\norm{\xi(t)}\le\delta$ and the transverse transient has
passed. Write $\zeta:=\Pg\xi$ with the projector frozen at
$\theta_\star$. Then there are constants
$C_\delta=O\bigl((C_N+C_\pop+L_J/\sigma_0)\,\delta+\lambda\delta\bigr)$
and
\[
    F=O\bigl(C_N\delta^2+c_0\lambda^2\bigr)
    \qquad(\text{the perturbation floor})
\]
such that, for $t\in[t_0,T]$,
\begin{equation}
    \norm{\zeta(t)}
    \;\le\;
    e^{-(r_{\mathrm{slow}}-C_\delta)(t-t_0)}\,\norm{\zeta(t_0)}
    \;+\;
    \frac{F}{\gamma\,(r_{\mathrm{slow}}-C_\delta)} ,
\label{eq:rate-upper}
\end{equation}
\begin{equation}
    \norm{\zeta(t)}
    \;\ge\;
    e^{-(r_{\mathrm{slow}}+C_\delta)(t-t_0)}\,\norm{\zeta(t_0)}
    \;-\;
    \frac{F}{\gamma\,(r_{\mathrm{slow}}+C_\delta)} .
\label{eq:rate-lower}
\end{equation}
Consequently the \emph{multiplicative} two-sided bracket
\begin{equation}
    e^{-(r_{\mathrm{slow}}+C_\delta)(t-t_0)}
    \lesssim
    \frac{\norm{\zeta(t)}}{\norm{\zeta(t_0)}}
    \lesssim
    e^{-(r_{\mathrm{slow}}-C_\delta)(t-t_0)}
\label{eq:rate-bracket}
\end{equation}
holds only \emph{while $\norm{\zeta(t)}$ exceeds a fixed multiple of
the perturbation floor} $F/(\gamma r_{\mathrm{slow}})$, and no
statement is made below the floor. The population expansion
\cref{eq:pop-expansion} then transfers these bounds to the
population-risk tail with multiplicative error
$1+O(\delta+\varepsilon_\star)$ on the same window. All of this is a
continuous-time statement: we claim no discrete-time nonlinear
analogue, and the discrete optimizer-dependence results of
Section~\ref{sec:exact-linear} are exact only for the linear model.
\end{proposition}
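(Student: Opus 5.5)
We project the continuous equation of motion \cref{eq:damped-particle} onto the frozen grokking subspace and treat everything that deviates from the linear null-mode oscillator \cref{eq:null-oscillator} as a forcing term. Concretely, with $\zeta=\Pg\xi$ and $\Pg$ frozen at $\theta_\star=\theta_\eq$, apply $\Pg$ to $m\ddot\xi+\gamma\dot\xi=-\nabla U(\theta)$. We write $\nabla U(\theta)=\mathsf J_N(\theta)^\top\bigl(F_N(\theta)-y\bigr)/N+\lambda\theta$.

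The weight-decay part gives $\lambda\zeta+\lambda\Pg\theta_\eq$. The second summand is bounded by $\lambda\norm{\PN\theta_\eq}\le c_0\lambda^2$, and this produces the $c_0\lambda^2$ part of the floor $F$.

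For the loss part, we split $\Pg\mathsf J_N(\theta)^\top$ as $\Pg\mathsf J_N^\top+\Pg(\mathsf J_N(\theta)-\mathsf J_N)^\top$. The first term vanishes because $\Scal\subseteq\Ker\mathsf J_N$. The second term is at most $L_J\norm\xi$ times the residual norm. We then expand the residual via Assumption~\ref{ass:local-linear} as $\mathsf J_N\xi+O(C_N\norm\xi^2)$, using that $\theta_\eq$ is an interpolator up to $O(\lambda)$. The projected forcing thus becomes $\Pg\nabla U=\lambda\zeta+E(t)$ with
\[
\norm{E(t)}\le \bigl(L_J\norm{\mathsf J_N}+\lambda\bigr)\delta\,\norm{\xi_\perp}+C_N\delta^2+c_0\lambda^2 .
\]
We must convert the $\norm{\xi_\perp}$ dependence into something proportional to $\norm\zeta$ plus a floor. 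The hypothesis that the transverse transient has passed is used here: on the window, the transverse part is slaved to the slow part. Using the Davis--Kahan bound of Assumption~\ref{ass:gap}, $\norm{\xi_\perp}\lesssim (L_J/\sigma_0)\delta\norm\zeta+C_N\delta^2/\sigma_0^2$. Equivalently, the drift of $\PN(\theta)$ away from $\PN(\theta_\star)$ mixes $\zeta$ into the transverse directions only at relative size $L_J\delta/\sigma_0$. This yields $\norm{E}\le \gamma C_\delta\norm\zeta+F$, with $C_\delta$ and $F$ of the stated orders. The division by $\gamma$ appears because the overdamped rate conversion divides forces by $\gamma$.

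Next, we compare the perturbed oscillator $m\ddot\zeta+\gamma\dot\zeta+\lambda\zeta=-E$ with the unperturbed one. In the overdamped regime we diagonalize the second-order system into its slow and fast characteristic roots $s_\pm$ of \cref{eq:continuous-roots}. We then write variation-of-constants for the slow amplitude, which has rate $-r_{\mathrm{slow}}$. The fast amplitude, with rate of order $\gamma/m$, is absorbed after the transient. Grönwall's inequality applied to $\frac{d}{dt}\norm{\zeta}\le-r_{\mathrm{slow}}\norm\zeta+C_\delta\norm\zeta+F/\gamma$, and to the reverse inequality for the lower bound, gives \cref{eq:rate-upper} and \cref{eq:rate-lower} directly. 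The constant term integrates to $F/(\gamma(r_{\mathrm{slow}}\mp C_\delta))$.

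The multiplicative bracket \cref{eq:rate-bracket} follows by dividing by $\norm{\zeta(t_0)}$ and noting that the additive floor is a bounded fraction of the main term exactly while $\norm{\zeta(t)}$ exceeds a fixed multiple of $F/(\gamma r_{\mathrm{slow}})$. Finally, we transfer the bounds to the population risk. We substitute into \cref{eq:pop-expansion} and repeat the linear-model bookkeeping of Theorem~\ref{thm:population-tail}. The remainders $\varepsilon_\star C_\pop\norm\xi^2$ and $C'\norm\xi^3$ are relative errors of size $O(\varepsilon_\star+\delta)$ against the $\frac12\zeta^\top G_\pop\zeta$ term, which is positive definite on $\Scal$.

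The main obstacle is the slaving step. This is the step that controls the transverse component, and through the rotation of $\Ncal_N(\theta)$ its leakage into $\Scal$, by $\norm\zeta$ rather than by $\delta$. A naive bound, $\norm{\xi_\perp}\le\delta$, would put all of the error into the floor instead of into $C_\delta$. To get the proportional bound, I would first try a Lyapunov function for the transverse block using the spectral gap $\sigma_0^2$. If that fails, I would accept a weaker statement in which the $L_J\delta^2/\sigma_0$ contribution is moved into $F$.
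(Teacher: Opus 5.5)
Your route is the paper's: freeze $\Pg$ at $\theta_\eq$, split the projected force into $\lambda\zeta$, the offset $\lambda\Pg\theta_\eq$ and the empirical force on $\Scal$, reduce to the overdamped slow equation, integrate two-sided norm inequalities, and insert into \eqref{eq:pop-expansion}. Your diagonalization with the exact roots $s_\pm$ is, if anything, cleaner than the paper's dropping of $m\ddot\zeta$, since it yields $r_{\mathrm{slow}}$ rather than $\lambda/\gamma$.

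The slaving step you single out is not needed for the second-order part of the force. The paper simply bounds $\norm{\PN\nabla\loss_N(\theta)}\le C_N'\norm{\xi}^2\le C_N'\delta^2$ and puts it into the floor (its appendix writes $F=O(C_N'\delta^2+c_0\lambda^2)$). Your naive bound $\norm{\xi_\perp}\le\delta$ produces exactly such a term, so that fallback is the intended argument, not a weakening. Davis--Kahan alone would not give the slaving bound anyway: it describes how $\PN(\theta)$ rotates, whereas bounding $\Pperp\xi$ is a dynamical force-balance statement.

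Control by $\norm{\zeta}$ does genuinely matter in your last step. The remainders in \eqref{eq:pop-expansion} scale with $\norm{\xi}^2$, not $\norm{\zeta}^2$. They are therefore relative errors against $\tfrac12\zeta^\top G_\pop\zeta$ only if $\norm{\xi}\lesssim\norm{\zeta}$ on the window. Nothing guarantees this, because $\Pzero\xi$ relaxes independently of $\zeta$; already in the linear model $\Pg w_0$ and $\Pzero w_0$ are independent.

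The genuine gap, which the paper's sketch shares, is the zeroth-order residual. Since $\theta_\eq$ minimizes $U$ with $\lambda>0$, $r_\star:=F_N(\theta_\eq)-y$ satisfies $\tfrac1N\mathsf J_N^\top r_\star=-\lambda\theta_\eq\neq0$. Incidentally, this forces $\PN\theta_\eq=0$ exactly, so the $c_0\lambda^2$ term is void. Hence $\tfrac1N\Pg(\mathsf J_N(\theta)-\mathsf J_N)^\top r_\star=\Pg H_r\xi+O(\norm{r_\star}\norm{\xi}^2)$ with $H_r:=\tfrac1N\sum_i r_{\star,i}\nabla^2 f(x_i;\theta_\eq)$. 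When $r_\star\in\Ran\mathsf J_N$, this has size up to $\lambda\,(L_J\norm{\theta_\eq}/\sigma_0)\norm{\xi}$.

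This term is linear in all of $\xi$, including $\zeta$ and $\Pzero\xi$. Its coefficient is generically of the same order as the restoring force $\lambda\zeta$. It is not the $\lambda\delta\norm{\xi_\perp}$ in your bound on $E$. Its consequences:
\begin{itemize}
\item It cannot be absorbed into $C_\delta=O(\cdots\delta+\lambda\delta)$.
\item As a floor term it lifts $F/(\gamma r_{\mathrm{slow}})$ to order $\delta$, which makes the bracket vacuous.
\item Through $\Pperp H_r\PN\xi$ it also drives $\xi_\perp$ at first order, so your slaving bound fails as stated too.
\end{itemize}
It vanishes for the linear model, which is why Section~\ref{sec:exact-linear} never sees it.

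A concrete counterexample: take $f(x;\theta)=x_1\theta_1\theta_2+x_2\theta_2$, one training pair $((1,0),1)$, and $\mu$ putting positive mass on $(1,0)$ and $(0,1)$. Then $\theta_\eq=(a,a)$ with $a^2=1-\lambda$, and $\Scal=\Ncal_N=\Span\{b\}$ with $b=(1,-1)/\sqrt2$. One computes $b^\top\nabla^2U(\theta_\eq)\,b=2\lambda$, so the linearized $\zeta$-equation is $m\ddot\zeta+\gamma\dot\zeta+2\lambda\zeta=0$. Thus $\zeta$ relaxes at about $2r_{\mathrm{slow}}$, and \eqref{eq:rate-lower} fails as $\delta\to0$.

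The paper's assertion that $\PN\nabla\loss_N$ ``vanishes to first order'' holds only at an exact interpolator. To close either argument you need one of two fixes:
\begin{itemize}
\item add a hypothesis such as $\norm{H_r}=O(\lambda\delta)$; or
\item replace the stiffness $\lambda$ by the restricted Hessian $\PN(\lambda I+H_r)\PN$, whose eigenvectors need not respect the split $\Scal\oplus\Ncal_\pop$.
\end{itemize}
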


\begin{proof}[Proof sketch]
Project the dynamics onto $\Scal$ (projector frozen at
$\theta_\star$) and treat the nonlinear terms as a perturbation: on
$\Ncal_N$ the empirical force is
$\PN\nabla\loss_N(\theta)=O(C_N\norm\xi^2)$ by
Assumption~\ref{ass:local-linear} (it vanishes to first order), the
weight-decay force is
$-\lambda\PN\theta=-\lambda\PN\xi+O(c_0\lambda^2)$, and the
projector-drift coupling is bounded through
Assumption~\ref{ass:gap}. In the overdamped regime the slaved
first-order dynamics is $\gamma\dot\zeta=-\lambda\zeta+e(t)$ with
$\norm{e(t)}\le F+\gamma C_\delta\norm{\zeta}$. The differential
inequalities
$\bigl|\frac{d}{dt}\norm\zeta+r_{\mathrm{slow}}\norm\zeta\bigr|
\le C_\delta\norm\zeta+F/\gamma$ integrate, by Gr\"onwall and
variation of constants, to
\cref{eq:rate-upper}--\cref{eq:rate-lower}; the additive terms are
the integrated forcing and cannot be removed, which is why
\cref{eq:rate-bracket} is conditional on staying above the floor.
Appendix~\ref{app:proofs-tail} records the details.
\end{proof}

\begin{remark}[Scope]
\label{rem:scope}
The results of
Sections~\ref{sec:exact-linear}--\ref{sec:population-tail} are exact
for models linear in their parameters. For nonlinear networks they
are local, and
\cref{eq:rate-upper}--\cref{eq:rate-bracket} deliberately degrade
from an equality to conditional bounds with an explicit perturbation
floor: Jacobian variation, rotation of $\Ncal_N$, coupling among
angular, transverse, and null coordinates, and higher-order loss
terms enter through $C_\delta$ and $F$, and nothing is claimed below
the floor or in discrete time for nonlinear models. We do not claim
that this local picture explains grokking in full generality; we claim
that it isolates a mechanism (soft breaking of population-active
empirical-null translations) whose signatures are exactly
computable in the linear regime and testable beyond it
(Remark~\ref{rem:predictions}). The conversion from population-risk
decay to the abrupt jump in test \emph{accuracy} observed empirically
involves a margin argument that we leave open
(Section~\ref{sec:conclusion}).
\end{remark}

\section{Numerical Verification and a Nonlinear Benchmark}
\label{sec:experiments}

Our numerical work has two roles. E1--E3 below
are \emph{no-fitted-parameter verifications in the solvable model}:
the theory of
Sections~\ref{sec:exact-linear}--\ref{sec:population-tail} is exact,
every subspace, projector, root, and tail coefficient is computable in
closed form, and the runs check that theorems, algebra, and
implementation agree. They do not test whether the mechanism operates in deep networks; that is the role of Section~\ref{sec:modular}, which probes external validity on an actual grokking task.
Deviations quoted for E1--E3 are numerical-tolerance statements about
deterministic recurrences, not statistical claims. 
\subsection{An Exactly Computable Synthetic Model}
\label{sec:synthetic-setup}

We take $P=60$, features $\phi(x)=Wx$ with
$W\in\R^{P\times d}$ a fixed random matrix with orthonormal columns,
$d=20$, inputs $x\sim\Ncal(0,I_d)$, and $N=8$ i.i.d.\ training
inputs. Then $G_\pop=WW^\top$ is the orthogonal projection onto
$\Ran(W)$ and
\[
    \dim\Ncal_N^\perp=8,
    \qquad
    \dim\Scal=d-N=12,
    \qquad
    \dim\Ncal_\pop=P-d=40 ,
\]
with all three subspaces available in closed form
($\Scal=W\cdot\Ker X$ for the input matrix $X\in\R^{N\times d}$).
Because $G_\pop$ is a projection, it acts isotropically on its range,
so compatibility \cref{eq:compatibility} holds for any teacher
with $\theta_T\in\Ncal_N^\perp$ (Remark~\ref{rem:compatibility}(b));
a generic teacher $\theta_T=Wu_T\in\Ran W$ breaks it. Targets are
noiseless, $y_i=\phi_i^\top\theta_T$,
$y_\star=\phi^\top\theta_T$. Unless stated otherwise
$\eta=0.05$, $\lambda=2\times10^{-3}$, $\beta=0.95$, $\theta_0\sim\Ncal(0,I)$
with a warm momentum buffer $v_0\sim\Ncal(0,10^{-2}I)$ (a cold start
$v_0=0$ makes every rotation charge zero for all time,
since all null coordinates then move radially; this is itself a check of
Remark~\ref{rem:discrete-angular}), run for $10^4$ steps; then
$\eta\lambda=10^{-4}<(1-\sqrt\beta)^2\approx6.41\times10^{-4}$
(Assumption~\ref{ass:weak-reg}) and
Assumption~\ref{ass:tail-separation} holds with large margin.

\subsection{E1: Rate Hierarchy and the Two Population Tails}
\label{sec:e1}

Figure~\ref{fig:rates}(a) shows the exact rate hierarchy. The
transverse component collapses on the fast clock with the predicted
envelope $\beta^{\,k/2}$ (all transverse modes are underdamped at
these hyperparameters);
the grokking component $\norm{\Pg\xi_k}$ and the population-null
component $\norm{\Pzero\xi_k}$ decay side by side at the exact slow root
$\rho_0^{\,k}$. The two decay at the same rate in parameter space and differ only in their visibility in function space. The discrete rotation charge
follows the exact Casoratian law $\beta^{\,k}$ of
Remark~\ref{rem:discrete-angular} down to the numerical floor.
Table~\ref{tab:rates} compares the measured rates with the exact and
asymptotic predictions.

Figure~\ref{fig:rates}(b) is the grokking curve itself. The training
loss reaches its floor within a few hundred iterations and contains no
slow component, as \cref{eq:training-blind} requires. The excess
population risk of the compatible teacher decays on the weight-decay
clock at the doubled rate $\rho_0^{2k}$
\cref{eq:compatible-tail}; the generic teacher decays at
$\rho_0^{k}$ (Theorem~\ref{thm:population-tail}(i)). Both predicted
slopes are drawn without any fitting.

\begin{figure}[t]
\centering
\includegraphics[width=\textwidth]{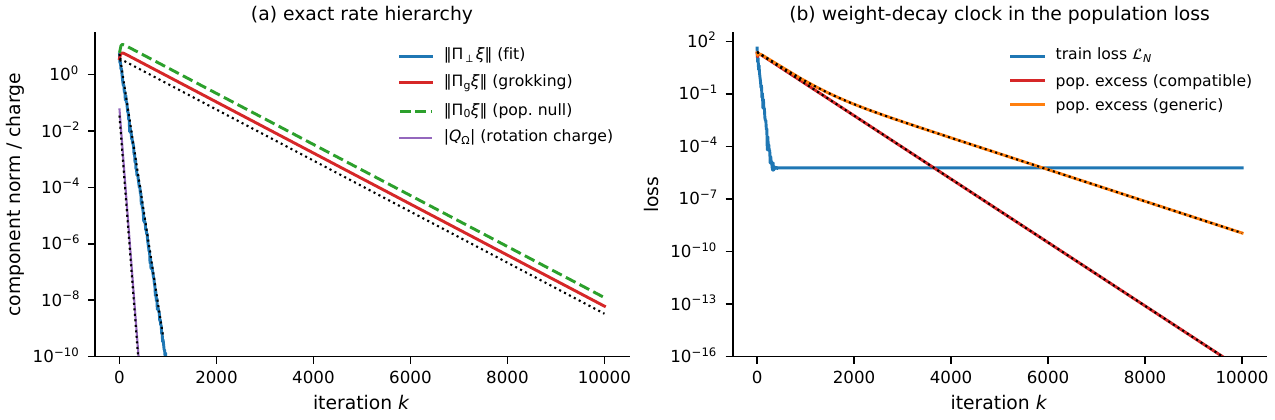}
\caption{E1 on the synthetic model
($P=60$, $d=20$, $N=8$, $\eta=0.05$, $\lambda=2\times10^{-3}$, $\beta=0.95$).
(a) Component norms and rotation charge against iteration; dotted
lines are the parameter-free predictions $\beta^{\,k/2}$,
$\rho_0^{\,k}$, and $\beta^{\,k}$.
(b) Training loss and excess population risk for the compatible and
generic teachers; dotted curves are the fully parameter-free
predictions $B\rho_0^{2k}$ and $|A\rho_0^{k}+B\rho_0^{2k}|$ with
$A,B$ computed from \cref{eq:tail-coefficients}; no quantity is fitted.}
\label{fig:rates}
\end{figure}

\begin{table}[t]
\centering
\caption{Rates per iteration in E1
($\eta=0.05$, $\lambda=2\times10^{-3}$, $\beta=0.95$): exact discrete
predictions, weak-regularization and continuous-time conversions, and
measured decay rates (late-window linear fits in log space).}
\label{tab:rates}
\small
\begin{tabular}{lcc}
\toprule
quantity & prediction & measured \\
\midrule
slow rate, exact $-\log\rho_0$ & $2.0847490\times10^{-3}$ & --- \\
slow rate, asymptotic $\eta\lambda/(1-\beta)$ & $2.0000000\times10^{-3}$ & --- \\
slow rate, continuous $r_{\mathrm{slow}}\sqrt\eta$ & $2.0847506\times10^{-3}$ & --- \\
$\norm{\Pg\xi}$ decay & $2.0847490\times10^{-3}$ & $2.0847494\times10^{-3}$ \\
$\norm{\Pzero\xi}$ decay & $2.0847490\times10^{-3}$ & $2.0847490\times10^{-3}$ \\
transverse envelope & $-\tfrac12\log\beta=2.5646647\times10^{-2}$ & $2.5569983\times10^{-2}$ \\
rotation charge $|Q_\Omega|$ decay & $-\log\beta=5.1293083\times10^{-2}$ & $5.1293294\times10^{-2}$ \\
population tail, compatible & $-2\log\rho_0=4.1694980\times10^{-3}$ & $4.1703601\times10^{-3}$ \\
population tail, generic & $-\log\rho_0=2.0847490\times10^{-3}$ & $2.0848660\times10^{-3}$ \\
\bottomrule
\end{tabular}
\end{table}

\subsection{E2: Scaling Sweep over $(\lambda,\eta,\beta)$}
\label{sec:e2}

We sweep $\lambda\in\{3\times10^{-4},10^{-3},3\times10^{-3}\}$,
$\eta\in\{0.02,0.05,0.1\}$,
$\beta\in\{0.5,0.8,0.9,0.95\}$ (36 configurations), measure the slow
decay rate of $\norm{\Pg\xi_k}$ by a late-window linear fit in
log space, and record the grokking iteration
$k_{\mathrm{grok}}$ at threshold
$\varepsilon=10^{-4}\norm{\Pg\xi_{k_0}}$ with $k_0=10^3$.
Figure~\ref{fig:scaling}(a) plots measured rates against the exact
$-\log\rho_0$: all 36 points lie on the identity line with maximal
relative deviation $8.4\times10^{-5}$, while the asymptotic rate
$\eta\lambda/(1-\beta)$ visibly deviates at the largest
$\eta\lambda/(1-\beta)^2$, by the amount the exact root predicts. Figure~\ref{fig:scaling}(b) shows the iteration-count
collapse of \cref{eq:kgrok-exact}: measured $k_{\mathrm{grok}}$
against $\log(\norm{q_0}/\varepsilon)/(-\log\rho_0)$ across all
configurations, with no free parameter; all points agree with the
prediction within $2.2\%$, the residual being the $O(1)$ transient
term of \cref{eq:kgrok-exact}. (These deviations quantify fit-window
and implementation tolerance in a deterministic recurrence; they are
identity checks, not statistical evidence.)

\begin{figure}[t]
\centering
\includegraphics[width=\textwidth]{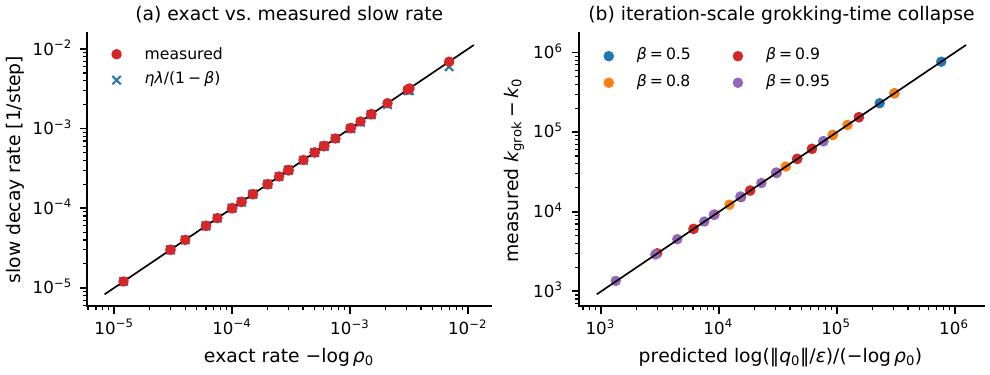}
\caption{E2 scaling sweep over 36 configurations of
$(\lambda,\eta,\beta)$.
(a) Measured slow rate versus the exact root $-\log\rho_0$ (identity
line); crosses show the asymptotic rate $\eta\lambda/(1-\beta)$,
which deviates where $\eta\lambda/(1-\beta)^2$ is not small.
(b) Grokking-iteration collapse against the exact prediction; the
line is the identity, not a fit.}
\label{fig:scaling}
\end{figure}

\subsection{E3: Interventions and Gauge Control}
\label{sec:e3}

\paragraph{Logarithmic delay shift (P2).}
At $k_0=3000$ (well after interpolation) we rescale the grokking
component of the state by $c\in\{0.03,\dots,30\}$, applying the rescaling to the $\Pg$-component of $\theta$ and of the momentum buffer so that the state stays on the slow branch; rescaling $\theta$ alone would excite
the fast null root $\rho_1$. The training predictions are unchanged to numerical precision (maximal $3.0\times10^{-14}$, double-precision roundoff). We then resume training. Figure~\ref{fig:intervention}(a) shows
$k_{\mathrm{grok}}$ against $c$: the shift is linear in $\log c$ with
measured slope $479.67405$ against the predicted
$1/(-\log\rho_0)=479.67405$.

\paragraph{Gauge control (P3).}
Unit perturbations at $k_0$ along $\Ncal_\pop$, $\Scal$, and
$\Ncal_N^\perp$ produce the predicted train/population signature
(Figure~\ref{fig:intervention}(b)): the $\Ncal_\pop$ direction changes
neither training predictions nor the function in $L^2_\mu$
(both at roundoff), the $\Scal$ direction changes only the population
predictions, and the transverse direction changes both. This is the
operational meaning of the decomposition
\cref{eq:decomposition}: ``flat direction'' is not one thing.

\begin{figure}[t]
\centering
\includegraphics[width=\textwidth]{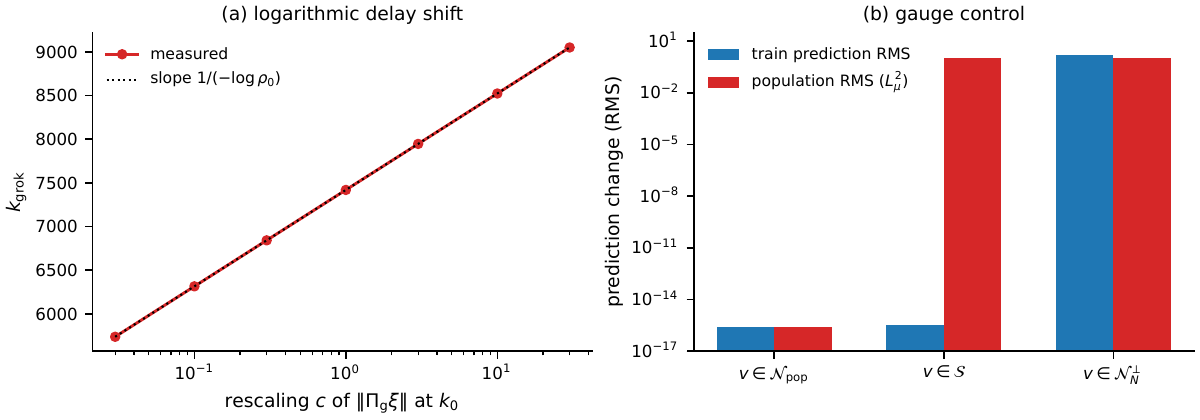}
\caption{E3 interventions.
(a) Rescaling the grokking component at $k_0$ shifts the grokking
iteration by $\log c/(-\log\rho_0)$ (dotted: predicted slope) while
leaving every training prediction unchanged.
(b) RMS change of training predictions and of the function in
$L^2_\mu$ under unit perturbations along $\Ncal_\pop$, $\Scal$, and
$\Ncal_N^\perp$; bars at $10^{-16}$ are exact zeros at roundoff.}
\label{fig:intervention}
\end{figure}

\paragraph{Optimizer dependence (P4).}
Replacing coupled $L_2$ by decoupled weight decay at
$(\eta,\lambda,\beta)=(0.05,2\times10^{-3},0.95)$ changes the measured slow
rate from $-\log\rho_0$ to $-\log(1-\eta\lambda)$, i.e.\ by the factor
$(1-\beta)$ predicted by Proposition~\ref{prop:decoupled}
(measured values in Table~\ref{tab:sgdw}).

\begin{table}[t]
\centering
\caption{P4: slow rate per iteration under coupled $L_2$
regularization versus decoupled weight decay at
$\eta=0.05$, $\lambda=2\times10^{-3}$, $\beta=0.95$; the ratio is
$(1-\beta)$ up to the $O(\eta\lambda)$ corrections of the exact
roots.}
\label{tab:sgdw}
\small
\begin{tabular}{lcc}
\toprule
optimizer & predicted slow rate & measured \\
\midrule
heavy ball + coupled $L_2$ & $-\log\rho_0=2.0847490\times10^{-3}$ & $2.0847490\times10^{-3}$ \\
heavy ball + decoupled decay & $-\log(1-\eta\lambda)=1.0000500\times10^{-4}$ & $1.0000500\times10^{-4}$ \\
\bottomrule
\end{tabular}
\end{table}

\subsection{A Nonlinear Benchmark: Modular Addition}
\label{sec:modular}

We now leave the solvable model. The task is addition modulo $p=17$:
inputs are concatenated one-hot pairs ($n_0=2p$), targets one-hot, with $289$ pairs in total, of which a fraction $0.55$ ($158$ pairs) are used for training; the model is a two-layer MLP of width $128$ with quadratic
activation and squared loss in the Gromov--Omnigrok style
\citep{gromov2023grokking,liu2022omnigrok}, trained in full batch by
heavy ball with coupled $L_2$
($\beta=0.9$, $\eta=0.2$, $\lambda=3\times10^{-4}$), from a
large-norm initialization ($5\times$ standard scaling) that puts
training in the memorization-first regime. All results below are
single-seed and single-architecture.

\paragraph{Delayed generalization.}
Training accuracy reaches $1$ at $k\approx4\times10^{2}$; test
accuracy passes $0.5$ at $k\approx7\times10^{3}$ and $0.9$ at
$k\approx1.05\times10^{4}$, a delay ratio of $\approx26$, and ends
near $0.96$--$0.99$ (Figure~\ref{fig:modular}(a)).

\paragraph{Kernel mismatch.}
At a post-interpolation, pre-grokking checkpoint ($k=2000$) we compute
the exact training Jacobian ($2686\times6528$) and a held-out
Jacobian on $120$ unseen input pairs ($2040\times6528$). The training
Jacobian has full row rank ($2686$, stable under singular-value
cutoffs $10^{-3}$--$10^{-5}$), so
$\dim\widehat{\Ncal}_N=3842$; the held-out Jacobian restricted to
$\widehat{\Ncal}_N$ has full rank $2040$ at every cutoff. The
population-visible part of the empirical null space has maximal rank at this holdout size (2040 of 2040).

\paragraph{Transfer of the parameter dependence.}
Figure~\ref{fig:modular}(c) tests this dependence outside the solvable
model using a full factorial sweep over
$\eta\in\{0.2,0.5,1.0\}$,
$\lambda\in\{1,2,3\}\times10^{-4}$, and
$\beta\in\{0.8,0.9,0.95\}$, for a total of $27$ hyperparameter
configurations. All configurations use the same training split and
initialization to isolate the dependence on $(\eta,\lambda,\beta)$.
Across the runs that reach test accuracy $0.9$ within the prescribed
training horizon, the iteration at which this threshold is first
reached scales with $(1-\beta)/(\eta\lambda)$, with a fitted log--log
slope of $1.01$.

On the tail (Figure~\ref{fig:modular}(b)), the test-loss excess
relative to its late plateau decays exponentially at
$4.5\times10^{-4}$ per step, compared with the parameter-free
prediction $-\log\rho_0=6.0\times10^{-4}$, corresponding to agreement
within $25\%$. In contrast, the population-visible empirical-null
component measured using projectors frozen at $k=2000$ decays several
times more slowly. This discrepancy indicates substantial rotation of
the Jacobian during grokking, consistent with ongoing feature learning,
and shows that the frozen-projector description of
Section~\ref{sec:nonlinear} degrades over long time windows, as allowed
by its remainder terms.

\begin{figure}[t]
\centering
\includegraphics[width=\textwidth]{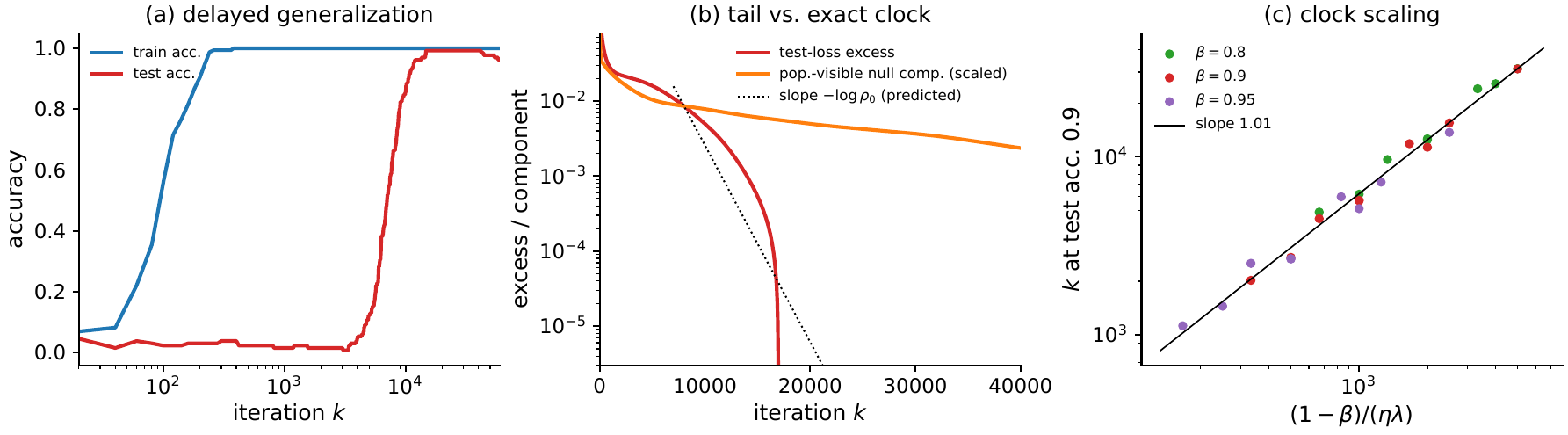}
\caption{Modular-addition benchmark
($p=17$, width $128$, quadratic activation, squared loss, heavy ball
with coupled $L_2$; single seed).
(a) Genuine delayed generalization at
$\eta=0.2$, $\lambda=3\times10^{-4}$, $\beta=0.9$.
(b) Test-loss excess relative to its late plateau, the
population-visible null component with projectors frozen at
$k=2000$ (scaled), and the parameter-free slope $-\log\rho_0$.
(c) Iteration at test accuracy $0.9$ against
$(1-\beta)/(\eta\lambda)$ across grokking configurations;
the fitted log--log slope is $1.01$.}
\label{fig:modular}
\end{figure}

\subsection{Scaling the Protocol}
\label{sec:protocol}

What the benchmark leaves open, we state as a protocol for
larger studies using the accompanying experiment package:
(i) multiple seeds with confidence intervals for grokking times,
fitted rates, and subspace dimensions, plus sensitivity of
$\widehat\Pg$ to SVD cutoff and holdout size (three cutoffs are
already reported above); (ii) interventions at depth: rescaling
$\widehat\Pg$-components and the momentum buffer
consistently with the slow manifold (rescaling $\theta$ alone excites the fast null root $\rho_1$), verifying that training predictions move only at
second order; (iii) gauge control along $\widehat\Pzero$ versus
$\widehat\Pg$; (iv) optimizer ablation (SGD+$L_2$, heavy
ball+$L_2$, SGDW as defined in Remark~\ref{rem:coupled-decoupled},
AdamW), probing the $(1-\beta)$ ratio of
Proposition~\ref{prop:decoupled}; (v) an $N$-sweep separating the
rate (predicted $N$-independent) from the $N$-dependent subspace
dimensions, amplitudes, and compatibility; and (vi) a margin
bridge from the loss tail to the accuracy jump.

\section{Related Work and Limitations}
\label{sec:related-work}

\paragraph{Grokking and delayed generalization.}
Grokking was introduced by \citet{power2022grokking} as a delayed
transition from memorization to generalization after training accuracy
has saturated on small algorithmic datasets.
\citet{liu2022effective} interpret the phenomenon through an effective
theory with memorization- and comprehension-dominated phases, whereas
\citet{liu2022omnigrok} attribute the transition to implicit
regularization toward lower-norm solutions.
\citet{junior2025grokking} showed that delayed generalization also
arises under sparsity- and rank-promoting regularizers, indicating that
Euclidean norm minimization is not the only relevant inductive bias.

Closest to us are the dynamical analyses.
\citet{boursier2025theoretical} established the $\frac{1}{\lambda}$ timescale of the post-interpolation Riemannian norm-minimizing flow; related quantitative delay laws appear in simplified settings \citep{xu2026grok,truong2026norm}. The
theorem-level differences of the present work are threefold
(Table~\ref{tab:comparison}). First, those analyses characterize
motion along the interpolation manifold; we decompose its
tangent space as $\Scal\oplus\Ncal_\pop$ and prove that only the
$\Scal$-component enters the slow empirical-null part of the
population-risk tail (Theorem~\ref{thm:population-tail}), including
the dichotomy $\rho_0^{2k}$ versus $\rho_0^{k}$ governed by an
explicit compatibility condition. Second, our rates are exact roots
of the discrete optimizer recurrence rather than continuous-time or
overdamped approximations, which yields the iteration-scale law
\cref{eq:kgrok-headline} and the coupled/decoupled optimizer
distinction (Proposition~\ref{prop:decoupled}) invisible at the level
of $\tau\propto1/\lambda$. Third, the intervention predictions
(P2--P3) are causal statements about the grokking subspace rather
than scaling laws; we are not aware of counterparts in prior delay
analyses.

\begin{table}[t]
\centering
\caption{Coarse theorem-level comparison with the closest dynamical
analyses of delayed generalization (as we read them; entries are
necessarily summary judgments).}
\label{tab:comparison}
\small
\begin{tabular}{|p{3.1cm}|p{3.4cm}|p{3.2cm}|p{3.4cm}|}
\hline
 & slow object & rate resolution & optimizer / causal \\
\hline
slow-manifold flow \citep{boursier2025theoretical} &
Riemannian norm flow on the interpolation manifold &
timescale $\propto1/\lambda$, continuous time &
not resolved at iteration scale; no interventions \\
\hline
delay laws \citep{xu2026grok,truong2026norm} &
delay in simplified/ridge settings &
quantitative in $\lambda$ &
not optimizer-resolved \\
\hline
Our work &
quotient $\Ker G_N/\Ker G_\pop$; only $\Scal$ in the slow tail &
exact discrete roots $\rho_0,\rho_1$; $\rho_0^{k}$ vs.\
$\rho_0^{2k}$ dichotomy &
coupled vs.\ decoupled factor $(1-\beta)$; interventions
(P2--P3) \\
\hline
\end{tabular}
\end{table}

\paragraph{Feature learning and mechanistic accounts.}
Mechanistic studies explain grokking through the gradual emergence of
structured internal representations.
\citet{nanda2023progress} showed that modular-addition transformers
develop Fourier-based circuits before the abrupt improvement in test
accuracy; \citet{gromov2023grokking} constructed an interpretable
two-layer model that groks modular arithmetic without explicit
regularization; \citet{mohamadi2024why} argued that early kernel-like
dynamics are insufficient for efficient generalization, whereas later
feature learning produces bounded-norm solutions with improved sample
complexity. For image classification, \citet{humayun2024deep} related
delayed generalization to changes in local input-space complexity.
Other theoretical work locates grokking in the passage between training regimes: \citet{lyu2024dichotomy} proved that distinct
implicit biases govern the two phases, \citet{kumar2024grokking}
interpreted the transition as a shift from lazy to rich training, and
\citet{zheng2024delays} connect delayed generalization with delayed
changes in representation geometry. These studies focus on how
representations evolve; our emphasis is on the parameter-space
directions that continue to evolve after interpolation, whatever
representation they encode.

\paragraph{Flat directions and separated timescales.}
\citet{beck2025edge} studied grokking near the threshold of linear
separability, where nearly flat directions produce critical slowing
down. Our decomposition sharpens ``flat direction'' into three
inequivalent objects (transverse, population-active null, and
population-null) with different clocks and different observability
(Figure~\ref{fig:intervention}(b)).

\paragraph{Symmetries and optimization geometry.}
Continuous parameter symmetries are common in homogeneous and deep
linear networks; rescaling symmetries lead to non-identifiability,
balanced factorizations, and structured implicit biases
\citep{saxe2014exact,du2018algorithmic}.
\citet{tanaka2021noether} and
\citet{marcotte2023abide,marcotte2024momentum} derived Noether-type
conservation laws for gradient flows with and without momentum; we
add the \emph{softly broken} counterpart with weight decay as the
explicit breaking term, and center it on the translation charges of
the empirical null space, whose breaking law is the grokking
coordinate's equation of motion. Neural tangent and Fisher metrics
provide pullback geometries measuring changes in predictions rather
than parameters \citep{amari1998natural,jacot2018neural}; we use them
as observability Grams, not as kinetic masses, precisely because
$G_N$ is singular in the overparameterized regime
(Section~\ref{sec:kernel-geometry}).

\paragraph{Limitations.}
The exact theory requires linearity in parameters and squared loss;
the nonlinear extension is local, continuous-time only, carries the
remainder assumptions and error floor of
Section~\ref{sec:nonlinear}, and the modular benchmark already shows
its frozen-projector description degrading quantitatively under
feature learning. The grokking subspace $\Scal$ is the
metric-dependent representative of an intrinsic quotient
(Section~\ref{sec:kernel-geometry}); neither $\Scal$ nor $\Pg$ is
reparameterization-invariant. The analysis is full-batch: stochastic
gradients add noise on $\Ncal_N$ that competes with the weight-decay
drift and can change the clock. Dataset size enters only through
subspace dimensions, amplitudes, and compatibility; the theory
makes no quantitative $N$-law for the delay. The population Gram
$G_\pop$ is an idealized object; any practical test uses a finite
held-out estimate with its own sampling error. The theory predicts
the decay of a signed population-risk excess relative to
$\theta_\eq$; converting a loss tail into the sharp jump of test
accuracy requires a margin argument that we do not supply.
The synthetic experiments are theorem verifications, and the
nonlinear benchmark is single-seed and minimal. Finally, adaptive and
preconditioned optimizers are outside the proven scope
(Appendix~\ref{app:natural-gradient}).

\section{Conclusion}
\label{sec:conclusion}
We proposed an exactly solvable mechanism for a post-interpolation
component of delayed population-risk relaxation that can contribute
to grokking. In the solvable regime --- linear-in-parameter models,
squared loss, full-batch heavy ball --- the empirical null space
carries a Euclidean algebra of prediction-preserving symmetries;
weight decay supplies the restoring force that softly breaks the
translation part, damping and inertia convert that breaking scale
into a rate, and the resulting charge-balance law is the equation of
motion of the slow coordinates. The mismatch between the empirical
and population kernels splits those coordinates into population-null
directions and the grokking subspace $\Scal$ (the metric-chosen
representative of an intrinsic quotient), and only the latter feeds
the slow component of the asymptotic tail of the signed excess
relative to the regularized equilibrium, with exact discrete rates:
the training loss runs on the fast clocks $(\beta^{\,k/2 }$
envelopes, $\beta^k$ rotation charges$)$ and is provably blind to
the slow root $\rho_0$, while the signed population excess carries
$\rho_0^{k}$ or $\rho_0^{2k}$ according to an explicit compatibility
condition. We verified the resulting predictions without fitted parameters in the solvable model: the $(1-\beta)/(\eta\lambda)$ iteration law, the logarithmic intervention shift, the train/test signature of $\Scal$ versus $\Ncal_\pop$, and the coupled/decoupled optimizer distinction. The iteration law was reproduced (log–log slope $1.01$) in a modular-addition benchmark that exhibits delayed generalization; validation at depth, across seeds, and for the remaining predictions is open.

The priorities for extending the theory are concrete. First,
stochastic gradients: mini-batch noise projects onto $\Ncal_N$ and
competes with the $\lambda$-drift, plausibly renormalizing the clock;
a stochastic soft Noether law is the natural tool. Second,
finite-sample estimation of $G_\pop$ and of the projectors
$\widehat \Pg$, with error bounds that make the protocol of
Section~\ref{sec:protocol} quantitative. Third, discrete momentum
beyond heavy ball (Nesterov and adaptive preconditioning), where
Proposition~\ref{prop:decoupled} already shows the clock is
optimizer-dependent. Fourth, the margin-to-accuracy conversion that
turns a population-loss tail into the abrupt accuracy jump that gave
grokking its name.

\subsubsection*{Acknowledgments}
Taeyoung Kim is supported by a KIAS Individual Grant (AP102201) at
Korea Institute for Advanced Study and supported by the Center for
Advanced Computation at Korea Institute for Advanced Study.


\bibliography{references}


\appendix

\section{Soft Noether Law for General Kinetic Metrics}
\label{app:noether}

This appendix derives the charge-drift law used in
Section~\ref{sec:symmetries}, in the generality of a
$\theta$-dependent positive-definite kinetic metric $M(\theta)$, and
records the distinction between the two inequivalent damping conventions that
coincide only for scalar $M$.

\subsection{Two Damping Conventions}
\label{app:damping-conventions}

For a positive-definite mass tensor $M(\theta)$ define
\begin{equation}
    \ham_M(\theta,p)
    =
    \tfrac12\,p^\top M(\theta)^{-1}p+U(\theta),
    \qquad
    p=M(\theta)\dot\theta .
\label{eq:app-general-hamiltonian}
\end{equation}
Two dissipative extensions of the canonical flow appear in the
literature and describe \emph{different} dynamics for non-scalar $M$:
\begin{align}
    \textbf{(momentum damping)}\quad
    &\dot p=-\partial_\theta\ham_M-\frac{\gamma}{m_0}\,p
    &&\Longrightarrow\quad
    M\ddot\theta+\frac{\gamma}{m_0}M\dot\theta+\nabla U
    =F_M(\theta,\dot\theta),
    \label{eq:app-momentum-damping}\\
    \textbf{(velocity damping)}\quad
    &\dot p=-\partial_\theta\ham_M-\gamma\,\dot\theta
    &&\Longrightarrow\quad
    M\ddot\theta+\gamma\dot\theta+\nabla U
    =F_M(\theta,\dot\theta),
    \label{eq:app-velocity-damping}
\end{align}
where $F_M$ collects the Christoffel-type terms generated by
$\partial_\theta M$ (they vanish for constant $M$) and $m_0$ is a
reference mass scale. For $M=mI$ (with $m_0=m$) the two conventions
coincide and reduce to \cref{eq:damped-particle}; this is the only
case used in the main text. For general $M$ they differ by the
operator $M/m_0$ multiplying the friction, they have different
Lyapunov structures, and a Noether analysis must fix one convention
throughout. Statements that mix
\cref{eq:app-momentum-damping} for the charge law with
\cref{eq:app-velocity-damping} for the overdamped limit are not
consistent for $M\neq mI$.

\subsection{Charge Drift under Momentum Damping}

We work with \cref{eq:app-momentum-damping}; the modification for
\cref{eq:app-velocity-damping} is indicated at the end. Let
$Q_X=\ip{p}{X(\theta)}=p_iX^i(\theta)$.

\paragraph{Step 1: the charge splits into a Poisson bracket and a
drift.}
Differentiating along the flow,
\begin{align}
    \frac{dQ_X}{dt}
    &=
    \dot p_i\,X^i+p_i\,(\partial_jX^i)\,\dot\theta^j
    =
    \underbrace{p_i(\partial_jX^i)\dot\theta^j-X^i\partial_i\ham_M}_{\{Q_X,\ham_M\}}
    \;-\;
    \frac{\gamma}{m_0}\,Q_X .
\label{eq:app-charge-split}
\end{align}

\paragraph{Step 2: the position derivative of $\ham_M$.}
Using $\partial_k(M^{-1})=-M^{-1}(\partial_kM)M^{-1}$ and
$M^{-1}p=\dot\theta$,
\begin{equation}
    \partial_k\ham_M
    =
    -\tfrac12\,\dot\theta^\top(\partial_kM)\dot\theta+\partial_kU .
\label{eq:app-dH}
\end{equation}

\paragraph{Step 3: assembling the Lie derivative.}
Substituting \cref{eq:app-dH} and $p_i=M_{ij}\dot\theta^j$ into
\cref{eq:app-charge-split} and using
$(\Lie_XM)_{ij}=X^k\partial_kM_{ij}+M_{kj}\partial_iX^k
+M_{ik}\partial_jX^k$,
\begin{equation}
    \{Q_X,\ham_M\}
    =
    \tfrac12\,\dot\theta^\top(\Lie_XM)\dot\theta
    -\ip{X}{\nabla U}.
\label{eq:app-bracket}
\end{equation}

\paragraph{Step 4: friction and weight decay.}
With $U=\loss_N+\frac\lambda2\norm\theta^2$,
\begin{equation}
    \frac{dQ_X}{dt}
    =
    \tfrac12\,\dot\theta^\top(\Lie_XM)\dot\theta
    -\ip{X}{\nabla\loss_N}
    -\frac{\gamma}{m_0}\,Q_X
    -\lambda\ip{\theta}{X}.
\label{eq:app-full-drift}
\end{equation}
Imposing the Killing condition $\Lie_XM=0$ and loss invariance
$\ip{X}{\nabla\loss_N}=0$ leaves the soft Noether law
\[
    \frac{dQ_X}{dt}
    =
    -\frac{\gamma}{m_0}\,Q_X-\lambda\ip{\theta}{X},
\]
which specializes to
\cref{eq:soft-noether-translation} for $M=mI$, $X\equiv b$. Under
velocity damping \cref{eq:app-velocity-damping} the drift term is
instead $-\gamma\ip{\dot\theta}{X}=-\gamma\,\ip{M^{-1}p}{X}$, which is
not proportional to $Q_X$ unless $M$ is scalar --- one more reason the
main text fixes $M=mI$.

\subsection{Affine Generators for a General Constant Metric}

\begin{theorem}[General-metric affine generators]
\label{thm:general-generators}
Let $M\succeq0$ be a constant metric, $M^{1/2}$ its PSD square root,
$(M^{1/2})^{\dagger}$ the Moore--Penrose pseudoinverse, and
$P:=M^{1/2}(M^{1/2})^{\dagger}$ the orthogonal projection onto
$\Ran(M)$. Consider $X_{b,A}(\theta)=b+A(\theta-\theta_\eq)$ for the
linear model. Then $X_{b,A}$ is a Killing field of $M$ and preserves
the empirical predictions iff
\begin{equation}
    \mathsf J_Nb=0,\qquad
    \mathsf J_NA=0,\qquad
    A^\top M+MA=0 .
\label{eq:app-generator-conditions}
\end{equation}
Translations require no metric condition ($\Lie_bM=0$ automatically
for constant $M$). For the rotation part, let
$U_0\in\R^{P\times r}$ have orthonormal columns spanning
$\Ker\bigl(\mathsf J_N(M^{1/2})^{\dagger}\bigr)\cap\Ran(M)$. A matrix
$A$ solves \cref{eq:app-generator-conditions} and satisfies the
compatibility conditions $\Ran(A)\subseteq\Ran(M)$ and
$\Ker(M)\subseteq\Ker(A)$ (equivalently $A=PAP$) if and only if
\begin{equation}
    A
    =
    (M^{1/2})^{\dagger}\,U_0\,\Omega\,U_0^\top M^{1/2},
    \qquad
    \Omega^\top=-\Omega .
\label{eq:app-explicit-generator}
\end{equation}
When $M\succ0$ the compatibility conditions are vacuous and
\cref{eq:app-explicit-generator} exhausts all solutions; the symmetry
algebra is $\Ncal_N\rtimes\so(r)$ with $r=\dim\Ker(\mathsf
J_N(M^{1/2})^{\dagger})$.
\end{theorem}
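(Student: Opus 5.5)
The plan mirrors the Euclidean proof of Theorem~\ref{thm:affine-generators}, with the skew condition replaced by its mass-weighted form. For the conditions \cref{eq:app-generator-conditions}, I would first reuse the main-text argument verbatim. The flow preserves $F_N$ pointwise iff $\mathsf J_NX_{b,A}\equiv0$. Evaluating this identity at $\theta=\theta_\eq$ gives $\mathsf J_Nb=0$, and differentiating it in $\theta$ gives $\mathsf J_NA=0$. For constant $M$, the Lie derivative formula of Step~3 reduces to $\Lie_XM=M\,DX+DX^\top M=MA+A^\top M$. So the Killing condition is exactly $A^\top M+MA=0$, and translations ($DX=0$) are automatically Killing.

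For the explicit parametrization, I would change variables. Set $S:=M^{1/2}$, $S^\dagger$ its pseudoinverse, $P=SS^\dagger=S^\dagger S$, and define $B:=SAS^\dagger$. This is a bijection between matrices with $A=PAP$ and matrices with $B=PBP$, with inverse $A=S^\dagger BS$.

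\textbf{Skewness.} Multiplying $A^\top M+MA=0$ on both sides by $S^\dagger$ gives
\[
S^\dagger A^\top S\,P+P\,SAS^\dagger=B^\top+B=0 .
\]
Here I use $PS=SP=S$ and $A=PAP$ to absorb the projectors. Conversely, $B^\top=-B$ together with $A=S^\dagger BS$ gives
\[
MA+A^\top M=S(PB+B^\top P)S=S(B+B^\top)S=0 .
\]

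\textbf{Nullity.} Since $A=S^\dagger BS$ and $S$ maps onto $\Ran(M)$, where $B$ is supported, the condition $\mathsf J_NA=0$ is equivalent to $\mathsf J_NS^\dagger B=0$. That says $\Ran(B)\subseteq\Ker(\mathsf J_NS^\dagger)\cap\Ran(M)=\Ran(U_0)$.

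\textbf{Conclusion.} A skew $B$ with range in $\Ran(U_0)$ also has support there, because $\Ran(B^\top)=\Ran(B)$ for skew $B$. Hence $B=U_0\Omega U_0^\top$ with $\Omega$ skew, exactly as in the Euclidean case. Transforming back gives \cref{eq:app-explicit-generator}. For the converse, I would simply check that this $A$ satisfies all three conditions together with $A=PAP$.

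When $M\succ0$, we have $P=I$, so the compatibility conditions are vacuous. Moreover $\Ker(\mathsf J_NS^{-1})=S\,\Ncal_N$, which has dimension $r_N$, so $r=r_N$. The bracket formula $[X_{b,A},X_{b',A'}]=X_{A'b-Ab',\,A'A-AA'}$ from the main text is metric-free, and the map $\Omega\mapsto S^{-1}U_0\Omega U_0^\top S$ is a Lie algebra isomorphism because it is a conjugation. Together these give $\Ncal_N\rtimes\so(r)$.

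The main obstacle is the bookkeeping with the pseudoinverse in the singular case. One must verify that $PS=S$, $S^\dagger P=S^\dagger$, and $A=PAP$ are used consistently, so that the equivalences hold in both directions. This is precisely why the theorem imposes the compatibility conditions.
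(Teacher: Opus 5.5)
Your proposal is correct and follows the paper's proof essentially step for step. You conjugate by $M^{1/2}$ via $B=M^{1/2}A(M^{1/2})^{\dagger}$, which turns the metric condition into $B^\top=-B$ and the prediction condition into $\Ran(B)\subseteq\Span(U_0)$, so that skew-symmetry forces $B=U_0\Omega U_0^\top$; your version also spells out the projector bookkeeping the paper's sketch leaves implicit ($PS=SP=S$, $B=PBP$, and using $B=BP$ to pass from $\mathsf J_NS^\dagger BS=0$ to $\mathsf J_NS^\dagger B=0$), and it correctly takes the image of $A\mapsto B$ to be $\{B:B=PBP\}$ rather than all $B$ with $\Ran(B)\subseteq\Ran(M)$.
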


\begin{proof}
The prediction condition and the Killing computation are as in
Theorem~\ref{thm:affine-generators}. For the rotation part, restrict
to $A=PAP$ and set $B:=M^{1/2}A(M^{1/2})^{\dagger}$; the map
$A\mapsto B$ is a bijection onto operators with
$\Ran(B)\subseteq\Ran(M)$, conjugation turns the metric condition
into $B^\top=-B$ and the prediction condition into
$\Ran(B)\subseteq\Span(U_0)$; skew-symmetry forces
$B=U_0\Omega U_0^\top$, and transforming back gives
\cref{eq:app-explicit-generator}. Conversely any such $A$ solves
\cref{eq:app-generator-conditions}.
\end{proof}

\begin{remark}
\label{rem:app-kernel-gauge}
When $M$ is singular, additional solutions of
\cref{eq:app-generator-conditions} act into or out of $\Ker(M)$; they
carry zero kinetic energy and vanishing charge, hence are dynamically
trivial for the charge analysis. Using a \emph{singular} $M$ (such as
$G_N$ itself) as a kinetic mass is, however, not merely a matter of
adding such generators: the Legendre transform $p=M\dot\theta$ fails
to be invertible, and the Hamiltonian flow is not defined without a
regularization $M_\epsilon=M+\epsilon I$; see
Appendix~\ref{app:natural-gradient}.
\end{remark}

\section{Deferred Proofs}
\label{app:proofs}

\subsection{Proof of Theorem~\ref{thm:discrete-spectrum}}
\label{app:proofs-discrete}

The recurrence \cref{eq:exact-recurrence} preserves the eigenspaces of
$G_N$, giving \cref{eq:modal-recurrence}; the general solution of a
scalar three-term recursion with distinct roots is
$c^+\rho^{+k}+c^-\rho^{-k}$, with the confluent form at a double
root.

\emph{(i) Stability.} The roots of
$\rho^2-a\rho+\beta=0$ satisfy $\rho^+\rho^-=\beta<1$ and
$\rho^++\rho^-=a$. If complex, $|\rho^\pm|=\sqrt\beta<1$ always. If
real, both roots lie in $(-1,1)$ iff the polynomial
$p(\rho)=\rho^2-a\rho+\beta$ satisfies $p(1)>0$ and $p(-1)>0$, i.e.\
$1-a+\beta>0$ and $1+a+\beta>0$, i.e.\
$0<\eta(h_j+\lambda)<2(1+\beta)$.

\emph{(ii) Null modes.} For $h=0$, $a_0=1+\beta-\eta\lambda$. The
discriminant $a_0^2-4\beta$ is positive iff
$a_0>2\sqrt\beta$, i.e.\ $\eta\lambda<(1-\sqrt\beta)^2$; both roots
are then positive (product $\beta>0$, sum $a_0>0$), with
$\rho_0\rho_1=\beta$ and $\rho_0>\sqrt\beta>\rho_1$;
$\rho_1=\beta/\rho_0\ge\beta$ follows from $\rho_0\le1$, and
$\rho_0<1$ from $p(1)=\eta\lambda>0$. Expanding
$\sqrt{a_0^2-4\beta}
=(1-\beta)\sqrt{1-2(1+\beta)\eta\lambda/(1-\beta)^2+O((\eta\lambda)^2)}$
gives
$\rho_0=1-\eta\lambda/(1-\beta)+O\bigl((\eta\lambda)^2/(1-\beta)^3\bigr)$.

\emph{(iii) Transverse modes.} On the real branch with $a>0$
($\eta(h+\lambda)\le(1-\sqrt\beta)^2$), the larger root
$\rho^+(a)=\frac12(a+\sqrt{a^2-4\beta})$ is strictly increasing in
$a$, and $a_j<a_0$ for $h_j>0$, so $\rho_j^+<\rho_0$; the smaller
root satisfies $\rho_j^-=\beta/\rho_j^+<\rho_j^+$. On the complex
branch, $|\rho_j^\pm|=\sqrt{\rho_j^+\rho_j^-}=\sqrt\beta<\rho_0$ by
(ii). The excluded sliver
$\eta(h+\lambda)\in\bigl((1+\sqrt\beta)^2,2(1+\beta)\bigr)$ has real
negative roots whose modulus approaches $1$ at the stability
boundary and can exceed $\rho_0$; the no-sign-flip condition removes
it. \hfill$\blacksquare$

\subsection{Proof of Corollary~\ref{cor:iteration-clock}}

On $\Ncal_N$, $\PN\xi_k=\rho_0^kw_0+\rho_1^kw_1$, so
$\Pg\xi_k=\rho_0^k\bigl(\Pg w_0+(\rho_1/\rho_0)^k\Pg w_1\bigr)$.
For $k\ge k_1:=\log\bigl(2\norm{\Pg w_1}/\norm{\Pg w_0}\bigr)
/\log(\rho_0/\rho_1)$ the bracket lies within a factor
$[\tfrac12,\tfrac32]$ of $\norm{\Pg w_0}$, and solving
$\rho_0^k\,\norm{\Pg w_0}=\varepsilon$ for $k$ gives
\cref{eq:kgrok-exact} with an $O(1)$ error containing $k_1$ and the
factor bounds; \cref{eq:kgrok-headline} follows from
\cref{eq:slow-root-expansion}. \hfill$\blacksquare$

\subsection{Proof of Theorem~\ref{thm:population-tail}}
\label{app:proofs-tail}

Because $\loss_\pop$ is exactly quadratic,
\[
    \loss_\pop(\theta_k)-\loss_\pop(\theta_\eq)
    =
    \ip{\nabla_\pop}{\xi_k}
    +\tfrac12\,\xi_k^\top G_\pop\,\xi_k .
\]
Decompose $\xi_k=\rho_0^kw_0+\rho_1^kw_1+\xi_k^\perp$, where
$\xi_k^\perp=\sum_{j:h_j>0}(c_j^+\rho_j^{+k}+c_j^-\rho_j^{-k})u_j$
satisfies $\norm{\xi_k^\perp}\le C_\perp\bar\rho^{\,k}$ with
$\bar\rho=\max_{j:h_j>0}|\rho_j^\pm|$ (double roots contribute
$k\rho^k=O((\rho+\epsilon)^k)$ absorbed by enlarging $\bar\rho$
infinitesimally within Assumption~\ref{ass:tail-separation}).

\emph{Linear term.} $\ip{\nabla_\pop}{w_0}\rho_0^k=A\rho_0^k$ using
$\nabla_\pop\perp\Ncal_\pop$ and $w_0\in\Ncal_N=\Scal\oplus\Ncal_\pop$,
so only $s_0=\Pg w_0$ contributes. The remaining pieces are
bounded by
$\norm{\nabla_\pop}(\norm{w_1}\rho_1^k+C_\perp\bar\rho^{\,k})
\le C_1\varrho^{\,k}$.

\emph{Quadratic term.} Expand in the three blocks. The slow--slow
piece is $\tfrac12w_0^\top G_\pop w_0\,\rho_0^{2k}=B\rho_0^{2k}$,
using $G_\pop \Pzero w_0=0$ and $\Pg w_0\perp\Ncal_\pop$ to reduce
$w_0^\top G_\pop w_0=s_0^\top G_\pop s_0$. Every other product
carries a factor pair among
$\{\rho_0\rho_1,\rho_0\bar\rho,\rho_1^2,\rho_1\bar\rho,\bar\rho^2\}$;
each is bounded by $\varrho$ (using $\rho_0\rho_1=\beta\le\rho_1$,
$\rho_0<1$), giving a total bound $C_2\varrho^{\,k}$. Set
$C=C_1+C_2$.

\emph{Training blindness.} $\loss_N$ is quadratic with
$\nabla\loss_N(\theta_\eq)=\nabla U(\theta_\eq)-\lambda\theta_\eq
=-\lambda\theta_\eq\in\Ncal_N^\perp$ and Hessian $G_N$, both of which
annihilate $\Ncal_N$: neither the linear nor the quadratic term of
$\loss_N(\theta_k)-\loss_N(\theta_\eq)$ contains
$\rho_0^k$ or $\rho_1^k$ factors paired with nonzero coefficients,
except through $\xi^\perp_k$; the bound
\cref{eq:training-blind} follows.

\emph{Positivity.} For $s\in\Scal\setminus\{0\}$:
$s\perp\Ker G_\pop$ and $G_\pop\succeq0$ imply $s^\top G_\pop s>0$.
\hfill$\blacksquare$

\subsection{Proof Sketch of Proposition~\ref{prop:rate-bracket}}

Write the continuous dynamics for
$\zeta:=\Pg\xi$ (projector frozen at $\theta_\star$):
$m\ddot\zeta+\gamma\dot\zeta+\lambda\zeta
=-\Pg\nabla\loss_N(\theta)-\lambda \Pg\theta_\star
+E_{\mathrm{rot}}(t)$. By Assumption~\ref{ass:local-linear},
$\norm{\PN\nabla\loss_N(\theta)}\le C_N'\norm\xi^2$ on
$\mathcal U$ (the first-order term lies in
$\Ran(\mathsf J_N^\top)=\Ncal_N^\perp$), and
$\norm{\PN\theta_\star}\le c_0\lambda$; the projector-drift term
$E_{\mathrm{rot}}$ is bounded, via the Davis--Kahan estimate of
Assumption~\ref{ass:gap}, by $(2L_J/\sigma_0)\,\delta$ times the
transverse force, and enters $C_\delta$. In the overdamped regime the
slaved first-order dynamics is
$\gamma\dot\zeta=-\lambda\zeta+e(t)$ with
$\norm{e(t)}\le F+\gamma C_\delta\norm\zeta$,
$F=O(C_N'\delta^2+c_0\lambda^2)$. Since
$\bigl|\frac{d}{dt}\norm\zeta\bigr|\le\norm{\dot\zeta}$, this yields
the two differential inequalities
\[
    \frac{d}{dt}\norm\zeta
    \le
    -(r_{\mathrm{slow}}-C_\delta)\norm\zeta+F/\gamma,
    \qquad
    \frac{d}{dt}\norm\zeta
    \ge
    -(r_{\mathrm{slow}}+C_\delta)\norm\zeta-F/\gamma,
\]
which integrate by variation of constants to
\cref{eq:rate-upper}--\cref{eq:rate-lower}. The additive terms are
the integrated forcing: they cannot be removed from a norm bound,
because direction rotation and cancellation prevent a pointwise
multiplicative lower bound once $\norm\zeta$ is comparable to the
floor $F/(\gamma r_{\mathrm{slow}})$. Above a fixed multiple of the
floor, dividing \cref{eq:rate-upper}--\cref{eq:rate-lower} by
$\norm{\zeta(t_0)}$ gives the conditional bracket
\cref{eq:rate-bracket}; the population statement follows by
inserting the bounds into \cref{eq:pop-expansion}.
\hfill$\blacksquare$

\section{Outlook: Preconditioned Dynamics}
\label{app:natural-gradient}

For the regularized NTK metric $M_\epsilon=G_N+\epsilon I$ with
$\epsilon\to0^+$, two heuristic expectations follow from the modal
picture. \emph{Transverse whitening}: in either damping convention of
Appendix~\ref{app:damping-conventions}, the overdamped transverse flow
is preconditioned by $M_\epsilon^{-1}$ and the transverse rates
whiten, $(h_j+\lambda)/(\gamma(h_j+\epsilon))\approx1/\gamma$,
independent of the curvature spectrum. \emph{Convention-dependent null
clock}: along $\Scal\subseteq\Ker G_N$ the effective inertia is
$\epsilon$, and the null-mode oscillator is
$\epsilon\ddot q+\gamma_{\mathrm{eff}}\dot q+\lambda q=0$ with
$\gamma_{\mathrm{eff}}=\gamma$ under velocity damping but
$\gamma_{\mathrm{eff}}=(\gamma/m_0)\epsilon$ under momentum damping.
The slow rate is then $\approx\lambda/\gamma$ (unchanged) under velocity damping, while under momentum damping it is
$\approx\lambda m_0/(\gamma\epsilon)$, growing as $\epsilon\to0$ and
saturating at $\gamma/(2m_0)$ once $\epsilon<4\lambda m_0^2/\gamma^2$. Whether a loss-adapted metric removes the delay depends
entirely on which dissipation model the optimizer realizes.

Three reasons this stays an outlook. First, $G_N$ is singular, so the
Legendre transform is undefined at $\epsilon=0$ and any statement is
about a limit $\epsilon\to0^+$ whose uniformity in $t$ is not
established. Second, for $M\neq mI$ the two damping conventions of
Appendix~\ref{app:damping-conventions} differ, and mapping
an actual optimizer (natural gradient with damping, Adam-type
preconditioning) onto one of them is itself a modeling decision.
Third, the momentum-damping version predicts that preconditioned
optimizers largely eliminate the delay at fixed $\lambda$ --- a testable prediction that should be settled empirically before any theorem is attempted.

\end{document}